\newtheorem{definicion}{Definition}[section]
\newtheorem{nota}[definicion]{Remark}
\newtheorem{prop}[definicion]{Proposition}
\newtheorem{obs}[definicion]{Remark}
\newtheorem{teorema}[definicion]{Theorem}
\newtheorem{cor}[definicion]{Corollary}
\newtheorem{ejp}[definicion]{Example}
\newcommand{\co}{\ensuremath{\colon}} 
\newcommand{\bn}{\ensuremath{\mathbb{N}}} 
\newcommand{\br}{\ensuremath{\mathbb{R}}} 
\begin{document}

\title{A density-sensitive hierarchical clustering method.}



\author{\'{A}lvaro Mart\'{i}nez-P\'{e}rez}       


\address{Departamento de An\'{a}lisis Econ\'{o}mico y Finanzas. Universidad de Castilla-La Mancha.\\ 
							Avda. Real F\'{a}brica de Seda, s/n. 45600 Talavera de la Reina 
Toledo.  Spain \\
              Tel.:  +34 902 204 100\\
              Fax: +34  902 204 1300}
              \email{alvaro.martinezperez@uclm.es}           

\date{}

\begin{abstract}
We define a hierarchical clustering method: $\alpha$-unchaining single linkage or $SL(\alpha)$. The input of this algorithm is a finite metric space and a certain parameter $\alpha$. This method is sensitive to the density of the distribution and offers some solution to the so called chaining effect. We also define a modified version, $SL^*(\alpha)$, to treat the chaining through points or small blocks. We study the theoretical properties of these methods and offer some theoretical background for the treatment of chaining effects.
\end{abstract}

\maketitle

\begin{footnotesize}
Keywords: Hierarchical clustering, single linkage, chaining effect, weakly unchaining, $\alpha$-bridge-unchaining. 
\end{footnotesize}

\begin{footnotesize} MSC: 62H30, 68T10.
\end{footnotesize}

\tableofcontents

\section{Introduction}

A \textit{clustering method} is an algorithm that takes as input a finite space with a distance function (typically, a finite metric space) $(X,d)$ and gives as output a partition of $X$. 

Kleinberg discussed in \cite{Kle} the problem of clustering in an axiomatic way and proposed a few basic properties that a clustering scheme should hold. Then, he proved that no standard clustering scheme satisfying this conditions simultaneously can exist. This does not imply the impossibility of defining a consistent standard clustering algorithm. Kleinberg's impossibility only holds when the unique input in the algorithm is the space and the set of distances. It can be avoided including, for example, the number of clusters to be obtained as part of the input. See \cite{ABL_10} and \cite{ZB}. Also, Ackerman and Ben-David, see \cite{AB_08},  showed that these axioms are consistent in the setting of clustering quality measures.

Carlsson and M\'{e}moli, see \cite{CM}, studied the analogous problem for clustering schemes that yield hierarchical decompositions instead of a certain partition of the space. See also \cite{CM3} and \cite{CM4}. Hierarchical clustering methods also take as input a finite metric space but the output is a hierarchical family of partitions of $X$.

They approach the subject focusing on a theoretical basis for the study of hierarchical clustering ($HC$). In the spirit of Kleinberg's result, they define a few reasonable conditions that a $HC$ method should hold. 
They prove that the unique $HC$ method satisfying three basic conditions is (the well-known) single linkage hierarchical clustering, $SL$ $HC$. Ackerman and Ben-David, see \cite{AB_11}, proved also a characterization of the class of linkage-based algorithms, including $SL$. See also \cite{ABL_12}. In the setting of partitional (standard) clustering see \cite{ZB}.  

Carlsson and M\'{e}moli also study the theoretical properties of $SL$ $HC$ obtaining some interesting results. The main advantage seems to be that this method enjoys some sort of stability which is defined by means of the Gromov-Hausdorff distance. However, the main weakness of $SL$ $HC$ is the so called \textit{chaining effect} which may merge clusters that, in practice, \textit{should be detected} by the algorithm and kept separated. One way to address this difficulty is to take into account the density. In a preprint entitled \textit{Multiparameter Hierarchical Clustering Methods} the same authors do this by including in the input of the algorithm a function that provides that information.

Our aim is to define a $HC$ method which offers some solution to this particular weakness without including any extra information. The first challenge is that the concept of \textit{clusters that should be detected} by the algorithm depends on the characteristics of the problem under study. The same happens with what we may consider the \textit{undesired} chaining effect. The definition from Lance and Williams, \cite{LW}, makes reference to the higher tendency of the points to add to a pre-existing group rather than defining the nucleus of a new group or joining to another single point. Our algorithm is oriented to another aspect of the chaining effect which is the tendency to merge two clusters when the minimal distance between them is small (even though they may have dense cores which are clearly distant apart). This is typically the problem of $SL$ $HC$. Also, we include as an undesired chaining effect the case of two big clusters joined by a chain of points or small clusters. These isolated points or small clusters may be interpreted as noise in the sample and we might want to distinguish the big picture and ignore their effect. This idea is closer to the type of chaining effect considered by Wishart in \cite{W}. 

There exist other linkage-based methods that enjoy some sort of sensitivity to density and offer some resistance to these chaining effects as average linkage, $AL$, or complete linkage, $CL$. These methods are extensively used in practice. However, although the main problem of the chaining effect of $SL$ $HC$ is reduced, there appears another effect that might be unwanted too. In these methods the distance between a point and a cluster is greater than the minimal distance. Therefore, they have a tendency to merge isolated points before joining them to pre-existing big clusters. Also, these methods are proved to be extremely unstable in the sense that small perturbations on the data yield very different dendrograms.  

Herein, we define a new $HC$ method on the basis of $SL$: \textit{$\alpha$-unchaining single linkage} or $SL(\alpha)$. The definition of $SL(\alpha)$ is based in the dimension of the Rips complexes defined by the points of $X$. These complexes contain some information about the density distribution of the sample. This allows us to define a density sensitive algorithm such that the input is just the set of distances between the points and a fixed parameter $\alpha\in \bn$. The parameter determines how sensitive the method will be to the chaining effect. 

To treat the chaining through single points or smaller blocks, we define another version of the method, $SL^*(\alpha)$, by adding an extra condition on $SL(\alpha)$. 

It is worth mentioning that  Ester, Kriegel, Sander and Xu, see \cite{EKSX}, defined a standard clustering method called DBSCAN. DBSCAN is also a density-sensitive algorithm where the input is just the data set and some parameters. Although this is not hierarchical clustering algorithm we include a short discussion comparing the type of clusterings detected by this method with the clusterings that $SL(\alpha)$ and $SL^*(\alpha)$ may detect in the levels of the dendrogram. 

This paper  intends to give a theoretical basis to the study of the problem. So, instead of checking the algorithm on examples of real data we rather try to find general properties characterizing what would be an undesired chaining of two blocks and how good is the algorithm detecting and unchaining them. However, we include several examples where the unchaining properties can be explicitly seen in the resulting dendrogram.

To the study the chaining effect we define the concepts of \textit{chained subsets}  and \textit{subsets chained by smaller blocks} as situations of minimal chaining so that they contain what we consider the problematic examples. Nevertheless, there may be many examples of chained subsets which should be clearly merged and there is margin to be more restrictive. In such context, a $HC$ method is \textit{strongly chaining} if every pair of chained subsets are always merged before they appear contained in different clusters. A $HC$ method is \textit{completely chaining} if, in addition, every pair of subsets chained through smaller blocks are merged before they appear contained in different clusters. Thus, strongly chaining methods and completely  chaining methods are extremely sensitive to these effects. This is the case, for example, of $SL$ $HC$. See Theorem \ref{Th: complete chaining}.

We define also precise conditions to define what we consider two blocks that should necessarily appear as independent blocks at some point. The definition considers two blocks that have dense cores and such that the minimal distance between them is small only because of a single pair of points. In particular, this pair of points creates a chaining between their dense cores. See Figure \ref{Ejp Chained edge}. This is a particular, more restrictive, example of chained subsets. We say that a $HC$ method is \textit{weakly unchaining} if, at least, it distinguishes that pair of blocks. Then, we prove that $SL(\alpha)$ satisfies this condition while other methods which are not strongly chaining as $AL$ and $CL$ $HC$ fail to be weakly unchaining. See Theorem \ref{Th: weakly}, Corollary \ref{Cor: weakly} and Example \ref{Ejp: AL-CL}.

We also define a minimal condition of two subsets chained by single points that should be detected. We say that a $HC$ method is \textit{$\alpha$-bridge-unchaining} if it is able to separate two blocks in that situation. $SL^*(\alpha)$ is proved to be more sensitive than that. It also detects some classes of chaining through smaller blocks as it is proved in Proposition \ref{Prop: moderate_2}. In particular, $SL^*(\alpha)$ is $\alpha$-bridge-unchaining. See Corollary \ref{Cor: unchaining}.

The structure of the paper is the following:

Section \ref{Section: background} contains the basic definitions and notation involved. It may be skipped by the experts. In Section \ref{Section: HC} we recall some well known hierachical clustering methods and include some different ways to formulate them. 

In Section \ref{Section: Modified SL} we first introduce the idea of what are we considering undesired chaining effect. Formal definitions are left to the last sections to enhance readability. In this section we are just trying to give the reader some notion of what is $SL(\alpha)$ trying to detect. Then, we present $SL(\alpha)$. We include a short explanation of the role of each step of the method and check it on a few examples. Section \ref{Section: SL^*} deals with the problem of chaining through smaller blocks. Again, we leave formal definitions for the last sections and we only discuss the intuitive idea. Then, we introduce a further step in the algorithm to define $SL^*(\alpha)$ and check it on some examples.


Section \ref{Section: Unchaining properties} studies the unchaining properties of $SL(\alpha)$. First we fix the theoretical background to study the chaining effect. We define the concepts of \textit{chained subsets} and subsets \textit{chained by a single edge}. We define the property of being \textit{strongly chaining}  for $HC$ methods which are extremely sensitive to the chaining effect. We prove that $SL$ is strongly chaining while $AL$ and $CL$ $HC$ are not. Then, we say that a $HC$ method is \textit{weakly unchaining} if it is at least capable of detecting a certain clustering when the subsets are chained by a single edge.  We prove that $SL(\alpha)$ is weakly unchaining while other methods, partially sensitive to the chaining effect as $AL$ and $CL$, are not. We also compare the results obtained by our method and the results offered by the standard clustering algorithm DBSCAN. 

Section \ref{Section: Unchaining properties 2} studies the unchaining properties of $SL^*(\alpha)$. We define the concept of subsets \textit{chained through smaller blocks}. We say that a $HC$ method is \textit{completely chaining} if it is strongly chaining and also unable to detect clusters if they are chained through smaller blocks. We prove that $SL$ is completely chaining. Then, we define the property of being \emph{$\alpha$-bridge-unchaining} for algorithms which are able to detect, at least, some type of subsets chained through single points. We prove that $SL^*(\alpha)$ is  $\alpha$-bridge-unchaining. We compare this case, also, with the treatment of the same input by DBSCAN.

Section \ref{Section: Conclusions} includes a short discussion about the main advantages of the methods defined and some comments about future research. 






\section{Background and notation}\label{Section: background}

A dendrogram over a finite set is a nested family of partitions. This is usually represented as a rooted tree. 

Let $\mathcal{P}(X)$ denote the collection of all partitions of a finite set $X=\{x_1,...,x_n\}$. Then, a dendrogram can also be described as a map $\theta\co [0,\infty)\to \mathcal{P}(X)$ such that:
\begin{itemize}
	\item[1.] $\theta(0)=\{\{x_1\},\{x_2\},...,\{x_n\}\}$,
	\item[2.] there exists $T$ such that $\theta(t)=X$ for every $t\geq T$,
	\item[3.] if $r\leq s$ then $\theta(r)$ refines $\theta(s)$,
	\item[4.] for all $r$ there exists $\varepsilon >0$ such that $\theta(r)=\theta(t)$ for $t\in [r,r+\varepsilon]$.
\end{itemize}

Notice that conditions 2 and 4 imply that there exist $t_0<t_1<...<t_m$ such that $\theta(r)=\theta(t_{i-1})$ for every $r\in [t_{i-1},t_i)$, $i=0,1,...,m$ and $\theta(r)=\theta(t_{m})=\{X\}$ for every $r\in [t_m,\infty)$.

For any partition $\{B_1,...,B_k\}\in \mathcal{P}(X)$, the subsets $B_i$ are called \textit{blocks}.

Let $\mathcal{D}(X)$ denote the collection of all possible dendrograms over a finite set $X$. Given some $\theta \in \mathcal{D}(X)$, let us denote $\theta(t)=\{B_1^t,...,B_{k(t)}^t\}$. Therefore, the nested family of partitions is given by the corresponding partitions at $t_0,...,t_m$, this is, $\{B_1^{t_i},...,B_{k(t_i)}^{t_i}\}$, $i=0,...,m$.

An \emph{ultrametric space} is a metric space $(X,d)$ such that 
$d(x,y)\leq \max \{d(x,z),d(z,y)\}$
for all $x,y,z\in X$. Given a finite metric space $X$ let $\mathcal{U}(X)$ denote the set of all ultrametrics over $X$.

There is a well known equivalence between trees and ultrametrics. See \cite{Hug} and \cite{M-M} for a complete exposition of how to build categorical equivalences between them. In particular, this may be translated into an equivalence between dendrograms and ultrametrics:

Thus, a hierarchical clustering method $\mathfrak{T}$ can be presented as an algorithm whose output is a a dendrogram or an ultrametric space.  Let  $\mathfrak{T}_{\mathcal{D}}(X,d)$ denote the dendrogram obtained by applying $\mathfrak{T}$ to a metric space $(X,d)$ and $\mathfrak{T}_{\mathcal{U}}(X,d)$ denote the corresponding ultrametric space.

Let us define the map $\eta \co \mathcal{D}(X) \to \mathcal{U}(X)$ as follows:

Given a dendrogram $\theta\in \mathcal{D}(X)$, let $\eta(\theta)=u_\theta$ be such that $u_\theta(x,x')=\min\{r\geq 0 \, | \, x,x' \mbox{ belong to the same block of } \theta(r)\}$.

\begin{prop}\cite[Theorem 9]{CM} $\eta$ is a bijection such that $\eta \circ \mathfrak{T}_{\mathcal{D}}=\mathfrak{T}_{\mathcal{U}}$.
\end{prop}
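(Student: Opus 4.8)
The substance of the statement is that $\eta$ is a bijection; the relation $\eta\circ\mathfrak{T}_{\mathcal{D}}=\mathfrak{T}_{\mathcal{U}}$ is then a compatibility of conventions, which I address at the end. The plan is to show in turn that $\eta$ is well defined, injective, and surjective.

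First I would check that $u_\theta$ is genuinely an ultrametric. For $x\neq x'$ the set $S_{x,x'}=\{r\geq 0 \mid x,x' \text{ lie in a common block of }\theta(r)\}$ is nonempty by condition 2, and by the refinement property 3 it is upward closed; combined with the step structure forced by conditions 2 and 4 it is an interval $[r_0,\infty)$, so the minimum defining $u_\theta(x,x')$ is attained. Conditions 1 and 4 give $u_\theta(x,x')=0 \iff x=x'$ (distinct points are separated on a neighbourhood of $0$, while $u_\theta(x,x)=0$), and symmetry is immediate. The point that needs care is the strong triangle inequality: writing $a=u_\theta(x,z)$, $b=u_\theta(z,x')$ and $m=\max\{a,b\}$, upward closure places both $\{x,z\}$ and $\{z,x'\}$ in a common block of $\theta(m)$; since $\theta(m)$ is a partition, ``lying in a common block'' is an equivalence relation, whence $x,x'$ share a block of $\theta(m)$ and $u_\theta(x,x')\leq m$.

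For injectivity I would note that $u_\theta$ determines each $\theta(r)$: the blocks of $\theta(r)$ are exactly the classes of the equivalence relation $x\sim x' \iff u_\theta(x,x')\leq r$, because $S_{x,x'}=[u_\theta(x,x'),\infty)$. Hence $u_\theta=u_{\theta'}$ forces $\theta=\theta'$. For surjectivity I would, given $u\in\mathcal{U}(X)$, define $\theta_u(r)$ to be the partition into classes of $x\sim_r x' \iff u(x,x')\leq r$; the ultrametric inequality makes $\sim_r$ transitive, so this is a partition, and the four dendrogram axioms follow from, respectively, $u(x,x')=0\iff x=x'$ (axiom 1); $T=\max_{x,x'}u(x,x')$, finite since $X$ is finite (axiom 2); monotonicity of the threshold (axiom 3); and the fact that $u$ takes only finitely many values, so $\theta_u$ is constant between consecutive values (axiom 4). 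Since $\min\{r\mid u(x,x')\leq r\}=u(x,x')$ we get $\eta(\theta_u)=u$, so $\theta_u$ is a two-sided inverse and $\eta$ is a bijection.

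Finally, $\eta\circ\mathfrak{T}_{\mathcal{D}}=\mathfrak{T}_{\mathcal{U}}$ records that the dendrogram and ultrametric outputs of a method $\mathfrak{T}$ are two presentations of the same nested family of partitions, linked by reading off merge levels; under the convention that $\mathfrak{T}_{\mathcal{U}}(X,d)$ is the ultrametric \emph{corresponding} to $\mathfrak{T}_{\mathcal{D}}(X,d)$ this is immediate from the definition of $\eta$. I expect the only genuinely delicate steps to be the strong triangle inequality for $u_\theta$ and, dually, the right-continuity axiom 4 for $\theta_u$; both rest on combining the equivalence-relation structure of each individual partition with the nesting and finiteness hypotheses, so no hard estimate is involved.
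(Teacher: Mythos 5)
The paper does not prove this proposition at all: it is imported verbatim as Theorem 9 of the cited Carlsson--M\'{e}moli paper \cite{CM}, so there is no internal proof to compare against. Your argument is correct and complete --- the well-definedness of $u_\theta$ (with the strong triangle inequality obtained from transitivity of ``sharing a block'' at level $\max\{a,b\}$), the identification $S_{x,x'}=[u_\theta(x,x'),\infty)$ giving injectivity, and the explicit inverse $u\mapsto\theta_u$ via threshold partitions giving surjectivity --- and it is essentially the standard construction by which \cite{CM} establishes the result, so nothing further is needed.
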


\textbf{Notation}: For any $HC$ method $\mathfrak{T}$ and any finite metric space $(X,d)$, let us denote $\mathfrak{T}_{\mathcal{D}}(X,d)=\theta_X$ and $\mathfrak{T}_{\mathcal{U}}(X,d)=(X,u_X)$. If there is no need to distinguish the metric space we shall just write $\mathfrak{T}_{\mathcal{D}}(X,d)=\theta$ and $\mathfrak{T}_{\mathcal{U}}=u$.

\section{Hierarchical clustering methods}\label{Section: HC}

Let us recall the definition of some well-known hierarchical clustering methods. We include here the description of \textit{single linkage} based on its $t$-connected components. Also the recursive description of \textit{single linkage}, \textit{complete linkage} and \textit{average linkage} as presented in \cite{CM}. We introduce also an alternative description of these methods, based in the recursive one. In our description we define a graph, $G_R^\ell$, which will be the key to build our new method, $SL(\alpha)$. We think that this approach might be useful to define other algorithms which might be better adapted to other specific problems. 

An $\varepsilon$-\emph{chain} is a finite sequence of points $x_0, ..., x_N$ that are separated by distances less or equal than $\varepsilon$: $d(x_i, x_{i+1}) < \varepsilon$. Two points are $\varepsilon$-\emph{connected} if there is an $\varepsilon$-chain joining them. Any two points in an $\varepsilon$-\emph{connected set} can be linked by an $\varepsilon$-chain. An $\varepsilon$-\emph{component} is a maximal $\varepsilon$-connected subset.

\begin{figure}[ht]
\centering
\includegraphics[scale=0.3]{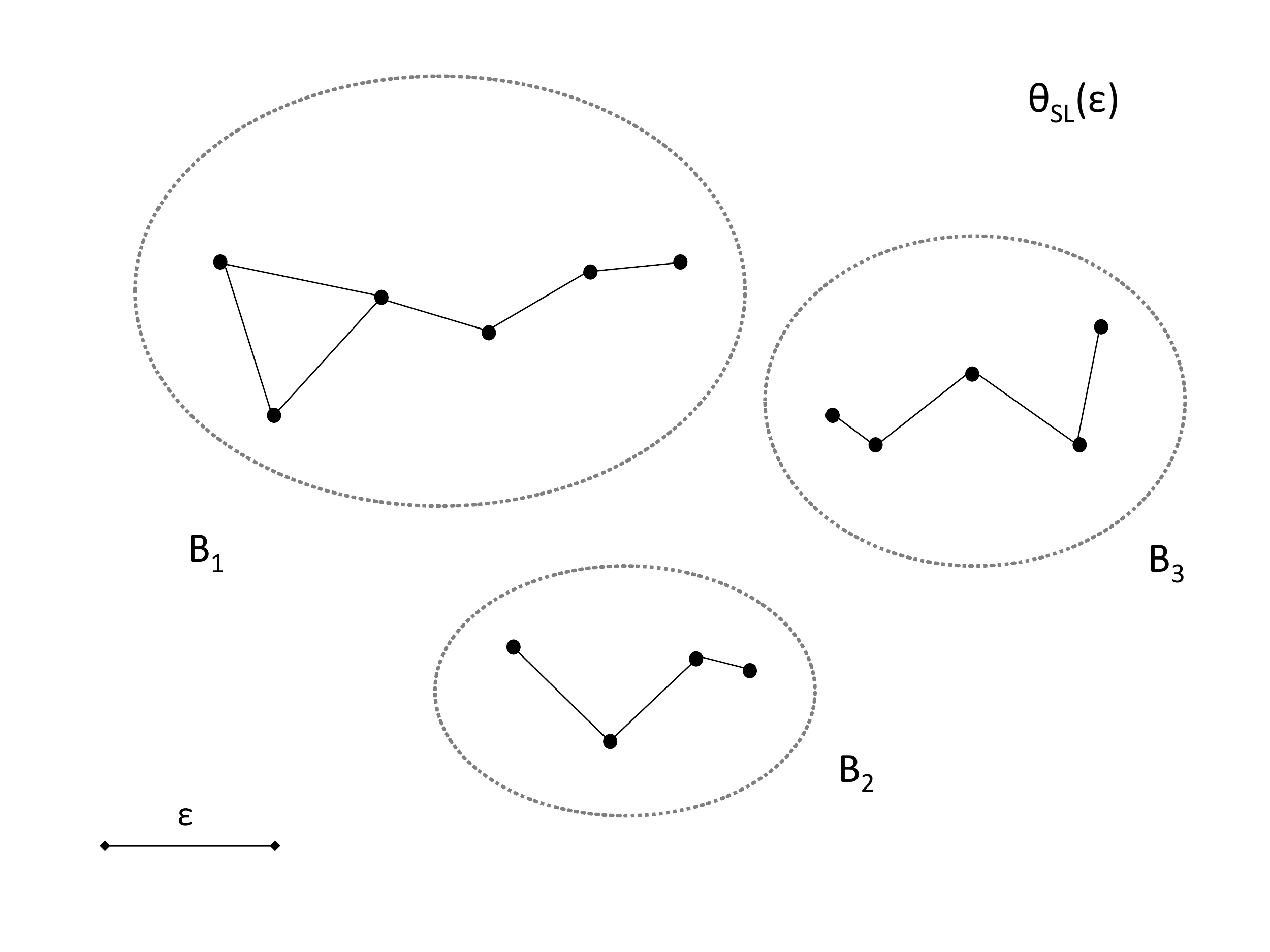}
\caption{$\theta_{SL}(\varepsilon)$ is the partition of $X$ in its $\varepsilon$-components.}
\label{Ejp epsilon-chains}
\end{figure}

Clearly, given a metric space and any $\varepsilon>0$, there is a partition of $X$ in its $\varepsilon$-components $\{C_1^\varepsilon,...,C_{k(\varepsilon)}^\varepsilon\}$.

Let $X$ be a finite metric set. The \textit{single linkage} $HC$ is defined by the map $\theta_{SL}\co [0,\infty)\to \mathcal{P}(X)$ such that $\theta_{SL}(t)$ is the partition of $X$ in its $t$-components. See Figure \ref{Ejp epsilon-chains}.

For other uses of $\varepsilon$-connectedness on computational topology see \cite{R} and \cite{M-C-L}. 



In \cite{CM} there is also an alternative formulation of $SL$ $HC$. In fact, the authors use a recursive procedure to redefine $SL$ $HC$, average linkage ($AL$) and complete linkage ($CL$) hierarchical clustering. The main advantage of this procedure is that it allows to merge more than two clusters at the same time. Therefore,  $AL$ and $CL$ $HC$ can be made \textit{permutation invariant}, meaning that the result of the hierarchical clustering does not depend on the order in which the points are introduced in the algorithm.  We reproduce here, for completeness, their formulation. 

Let $(X,d)$ be a finite metric space where $X=\{x_1,...,x_n\}$ and let $L$ denote a family of linkage functions on $X$:
\[L:=\{\ell \colon \mathcal{C}(X)\times \mathcal{C}(X) \to \br^+ \, | \, \ell \mbox{ is bounded and non-negative } \}\]
where $\mathcal{C}(X)$ denotes the collection of all non-empty subsets of $X$.

Some standard choices for $\ell$ are:

\begin{itemize}
	\item Single linkage: $\ell^{SL}(B,B')=\min_{(x,x')\in B\times B'}d(x,x')$ 
	\item Complete linkage: $\ell^{CL}(B,B')=\max_{(x,x')\in B\times B'}d(x,x')$
	\item Average linkage: $\ell^{AL}(B,B')=\frac{\sum_{(x,x')\in B\times B'}d(x,x')}{\#(B)\cdot \#(B')}$ where $\#(X)$ denotes the cardinality of the set $X$.
\end{itemize} 

Fix some linkage function $\ell\in L$. Then, the recursive formulation is as follows

\begin{itemize}
	\item[1.] For each $R>0$ consider the equivalence relation $\sim_{\ell,R}$ on blocks of a partition $\Pi\in \mathcal{P}(X)$, given by $B\sim_{\ell,R}B'$ if and only if there is a sequence of blocks $B=B_1,...,B_s=B'$ in $\Pi$ with $\ell(B_k,B_{k+1})\leq R$ for $k=1,...,s-1$.
	\item[2.] Consider the sequences $R_0,R_1,R_2,... \in [0,\infty)$ and $\Theta_0, \Theta_1, \Theta_2,... \in \mathcal{P}(X)$ given by  $R_0=0$, $\Theta_0:=\{x_1,...,x_n\}$, and recursively for $i\geq 1$ by $\Theta_{i}=\frac{\Theta_{i-1}}{\sim_{\ell,R_{i}}}$ where $R_{i}:=\min\{\ell(B,B')\, | \, B,B'\in \Theta_{i-1}, \ B\neq B'\}$ until $\Theta_{i}=\{X\}$.
	\item[3.] Finally, let $\theta^\ell\colon [0,\infty)\to \mathcal{P}(X)$ be such that $\theta^\ell(r):=\Theta_{i(r)}$ with $i(r):=\max\{i\, | \, R_i\leq r\}$.
\end{itemize}

\begin{obs}\label{Remark: components} We can also reformulate the recursive algorithm as follows.

\begin{itemize}
	\item[1.] Let $\Theta_0:=\{x_1,...,x_n\}$ and $R_0=0$.
	\item[2.] For every $i\geq 1$, while $\Theta_{i-1}\neq \{X\}$, let $R_{i}:=\min\{\ell(B,B')\, | \, B,B'\in \Theta_{i-1}, \ B\neq B'\}$. Then, let $G_{R_{i}}^\ell$ be a graph whose vertices are the blocks of $\Theta_{i-1}$ and such that there is an edge joining $B$ and $B'$ if and only if $\ell(B,B')\leq R_{i}$. 
	\item[3.] Consider the equivalence relation $B\sim_{\ell,R} B'$ if and only if $B,B'$ are in the same connected component of $G_R^\ell$. Then, $\Theta_{i}=\frac{\Theta_{i-1}}{\sim_{\ell,R_{i}}}$.
	\item[4.] Finally, let $\theta^\ell\colon [0,\infty)\to \mathcal{P}(X)$ be such that $\theta^\ell(r):=\Theta_{i(r)}$ with $i(r):=\max\{i\, | \, R_i\leq r\}$.
\end{itemize} 
\end{obs}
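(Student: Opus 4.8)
The plan is to show that the reformulated procedure in Remark \ref{Remark: components} produces exactly the same sequences $R_0, R_1, \ldots$ and $\Theta_0, \Theta_1, \ldots$ — and hence the same map $\theta^\ell$ — as the original recursive formulation. Since both procedures share the same initialization ($\Theta_0 = \{x_1, \ldots, x_n\}$, $R_0 = 0$), the same rule $R_i = \min\{\ell(B,B') \mid B, B' \in \Theta_{i-1},\ B \neq B'\}$ for the threshold, the same stopping condition, and the same final assignment $\theta^\ell(r) = \Theta_{i(r)}$, the whole statement reduces to a single point: that for a fixed partition $\Theta_{i-1}$ and a fixed value $R = R_i$, the quotient $\Theta_{i-1}/\!\sim_{\ell,R}$ computed via the chain relation of step~1 agrees with the quotient computed via the connected components of the graph $G_R^\ell$ of steps~2--3.

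The core lemma I would isolate is purely graph-theoretic. Build the graph $G_R^\ell$ whose vertices are the blocks of $\Theta_{i-1}$ and whose edges join $B$ to $B'$ precisely when $\ell(B,B') \leq R$. By the definition of a path in a graph, two vertices $B$ and $B'$ lie in the same connected component of $G_R^\ell$ if and only if there is a finite sequence $B = B_1, B_2, \ldots, B_s = B'$ of blocks in which every consecutive pair $B_k, B_{k+1}$ is joined by an edge, i.e.\ satisfies $\ell(B_k, B_{k+1}) \leq R$. This is verbatim the defining condition of the relation $B \sim_{\ell,R} B'$ from step~1 of the recursive formulation. Hence the two equivalence relations on the blocks of $\Theta_{i-1}$ coincide, and so do the resulting quotients.

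With this identity in hand I would close the argument by induction on $i$. The base case $\Theta_0$ is identical by definition. For the inductive step, assume the two procedures have produced the same partition $\Theta_{i-1}$; then the threshold $R_i$, being determined by the same minimization over the same partition, is the same in both, and by the core lemma the refined partitions $\Theta_i = \Theta_{i-1}/\!\sim_{\ell,R_i}$ coincide as well. Consequently the entire sequences $(R_i)$ and $(\Theta_i)$ agree, which forces $i(r) = \max\{i \mid R_i \leq r\}$ and therefore $\theta^\ell(r) = \Theta_{i(r)}$ to agree for every $r$.

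I do not expect a genuine obstacle here: the content is essentially the standard fact that the connected-component relation of a graph is the transitive closure of its adjacency relation. The only points requiring care are bookkeeping: checking that the \textit{while} loop in the reformulation terminates at exactly the same index (when $\Theta_{i-1} = \{X\}$) as the \textit{until} clause of the original, and verifying that $G_R^\ell$ is well defined at the relevant thresholds so that ``same connected component'' is unambiguous. Both are immediate once one observes that the threshold rule and the stopping rule are literally copied between the two formulations.
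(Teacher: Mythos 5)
Your proposal is correct: the chain condition defining $\sim_{\ell,R}$ in the original formulation is verbatim the existence of a path in the graph $G_R^\ell$, so the two equivalence relations (and hence the quotients $\Theta_{i-1}/\!\sim_{\ell,R_i}$) coincide, and the induction on $i$ transporting this identity through the recursion is routine. The paper gives no proof of this remark at all---it treats the reformulation as immediate for precisely the reason you isolate (connectivity is the transitive closure of adjacency)---so your argument is simply the natural formalization of the same observation, not a different route.
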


This formulation allows us to consider other properties of the graph $G_R^\ell$, along with the connected components, to define the equivalence relation $B\sim_{\ell,R} B'$. This means that we may introduce further conditions to merge the blocks and, this way, reduce some indesired results as the chaining effect. See also \cite{M}.

\section{$\alpha$-unchaining single linkage hierarchical clustering: $SL(\alpha)$}\label{Section: Modified SL}

The chaining effect is usually mentioned as one of the problems to solve in clustering. However, there are different approaches to define ``chaining effects''.

In \cite{CM}, the authors refer to the chaining effect from \cite{LW} which is the one defined by Williams, Lambert and Lance in 
\cite{WLL}. This version of the ``chaining effect'' takes account on the tendency of a group to merge with single points or small groups rather than with other groups of comparable size. Thus, Williams, Lambert and Lance study and measure it by comparing the cardinality of the groups.

Herein, we are focusing on another aspect. We want to deal with the tendency of two clusters to be merged when the minimal distance between them is small independently of their distribution. This can be a problem when the clusters have dense cores distant apart. See figure \ref{Ejp Chained edge}. This is, typically, the chaining effect one finds in $SL$ $HC$.

\begin{figure}[ht]
\centering
\includegraphics[scale=0.4]{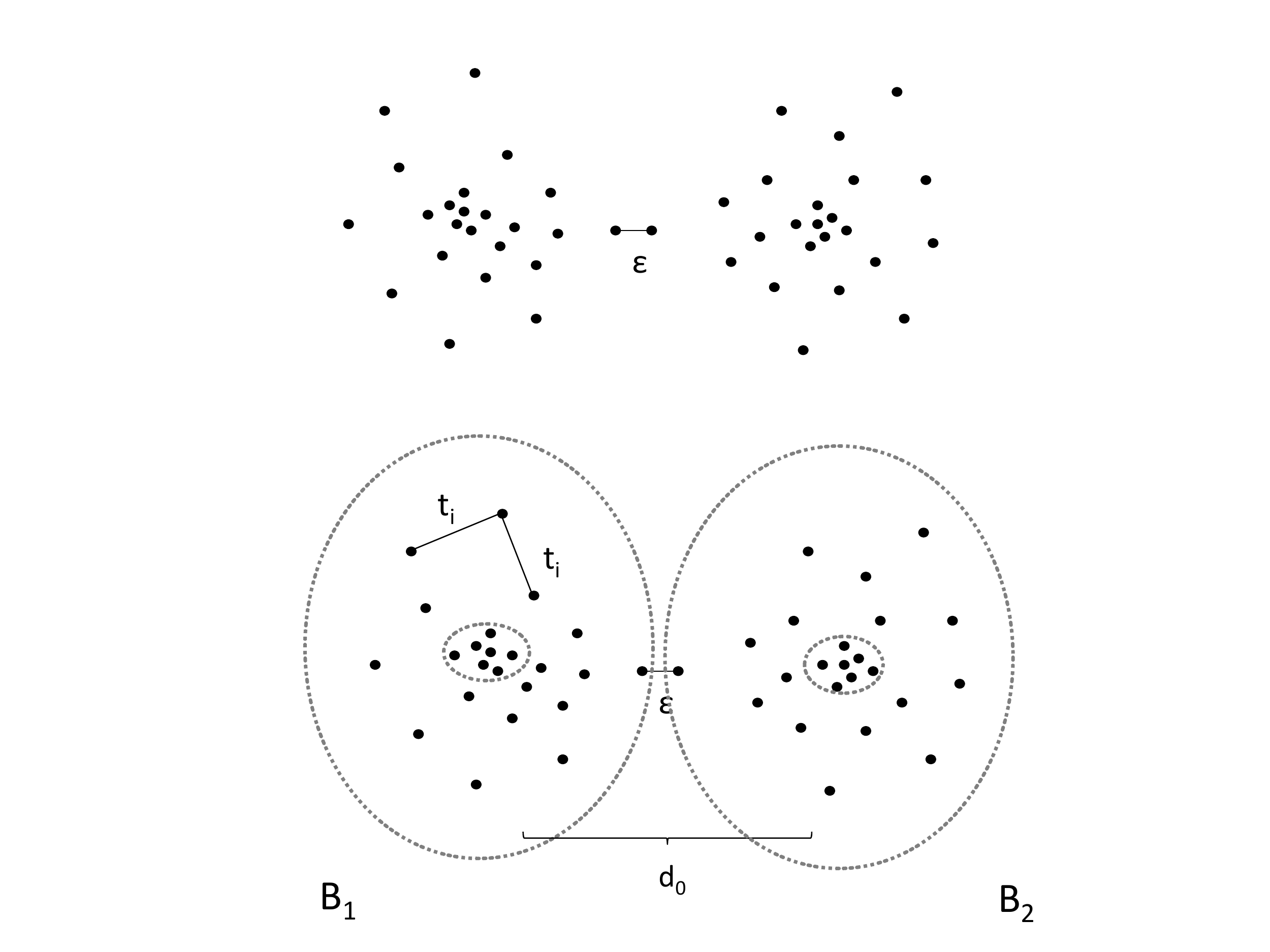}
\caption{The minimal distance between the blocks $B_1$ and $B_2$ is $\varepsilon$. The clustering $\{B_1,B_2\}$ would not be detected by $SL$ $HC$.}
\label{Ejp Chained edge}
\end{figure}

Given a finite metric space $(X,d)$, let $F_t(X,d)$ be the Rips (or Vietoris-Rips) complex of $(X,d)$. Let us recall that the Rips complex of a metric space $(X,d)$ is a simplicial complex whose vertices are the points of $X$ and $[v_0,...,v_k]$ is a simplex of $F_t(X,d)$ if and only if $d(v_i,v_j)\leq t$ for every $i,j$. Given any subset $Y\subset X$, by $F_t(Y)$ we refer to the subcomplex of $F_t(X)$ defined by the vertices in $Y$. A simplex $[v_0,...,v_k]$ has dimension $k$. The dimension of a simplicial complex is the maximal dimension of its simplices.

Notice that densely packed points produce high-dimensional simplices in the Rips complex.

\begin{figure}[ht]
\centering
\includegraphics[scale=0.4]{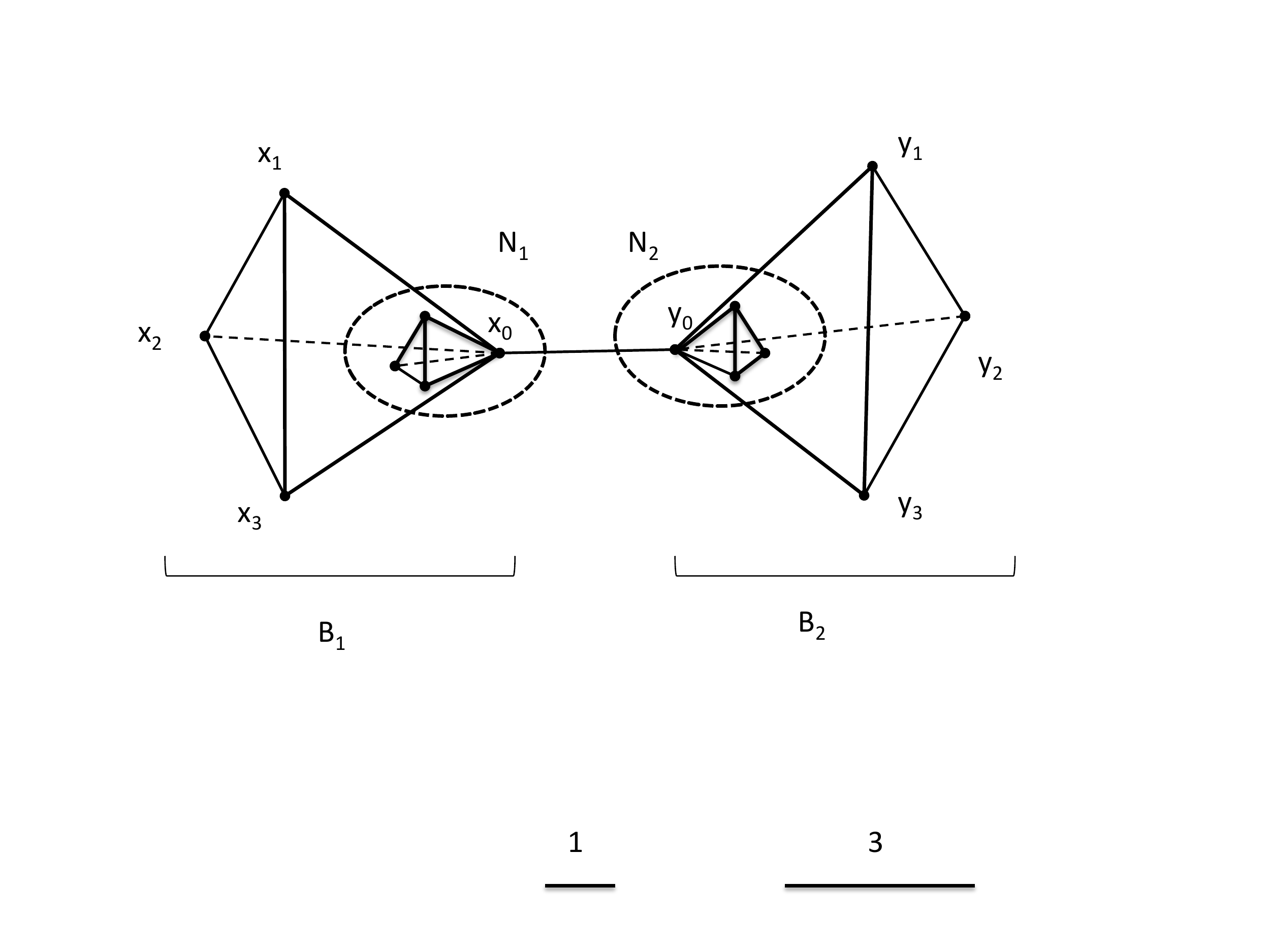}
\caption{The clustering $\{B_1,B_2\}$ is detected by $SL(\alpha)$ for $\alpha<3$.}
\label{Ejp Chain_1b}
\end{figure}

\begin{ejp}\label{Ejp Chain1} Let $(X,d)$ be the graph from Figure \ref{Ejp Chain_1b}. The edges in $N_1,N_2$ have length 1 and the rest have length 3. The distances between vertices are measured as the minimal length of a path joining them. 

Consider $F_{1}(X,d)$. Then, the vertices of $N_1$ and $N_2$ define 3-dimensional simplices. For any vertex $v\in X\backslash \{N_1\cup N_2\}$, there is no vertex $w$ with $d(v,w)\leq 1$. Therefore, they define 0-dimensional simplices and they are not part of any 1-dimensional simplex of $F_{1}(X,d)$.  
\end{ejp}

We define a modified single linkage hierarchical clustering method, $SL(\alpha)$, on the basis of $SL$ introducing a parameter $\alpha\in \bn$. This method allows us to take into account density without having to provide any additional input to the algorithm apart from $\alpha$ and the distances between the points. 

Let $(X,d)$ be a finite metric space with $X=\{x_1,...,x_n\}$. 

Notice that in section \ref{Section: HC} the recursive definition of $SL$, $CL$ and $AL$ used the distances, $R_i$, between the blocks from the previous step. For technical reasons, to define our method we need to use the ordered set of distances in the data set, $(D,<)$.

Let $d_{ij}:=d(x_i,x_j)$ and  $D:=\{t_i \, : \, 0\leq i \leq m\}=\{d_{ij} \, : \, 1\leq i, j \leq n\}$ with $t_i<t_j$ $\forall \, i<j$ where ``<'' denotes the order of the real numbers. Clearly, $t_0=0$.


Let the dendrogram defined by $SL(\alpha)$, $\mathfrak{T}^{SL(\alpha)}_\mathcal{D}(X,d)=\theta_{X,\alpha}$ or simply $\theta_\alpha$, be as follows:

\begin{itemize}

	\item[1)] Let $\theta_\alpha(0):=\{\{x_1\},...,\{x_n\}\}$ and $\theta_\alpha(t):=\theta_\alpha(0)$ $\forall t< t_1$. Now, for every $i$, given $\theta_\alpha[t_{i-1},t_i)=\theta_\alpha(t_{i-1})=\{B_1,...,B_m\}$, we define recursively  $\theta_\alpha$ on the interval $[t_i,t_{i+1})$ as follows: 

	\item[2)] Let $G_\alpha^{t_i}$ be a graph with vertices $\mathcal{V}(G_\alpha^{t_i}):=\{B_1,...,B_m\}$ and edges $\mathcal{E}(G_\alpha^{t_i}):=\{B_j,B_k\}$ such that the following conditions hold:
		\begin{itemize}
			\item[i)] $\min\{d(x,y)\, | \, x\in B_j,\ y\in B_k\}\leq t_i$.
			\item[ii)] there is a simplex $\Delta \in F_{t_i}(B_j\cup  B_k)$  such that $\Delta \cap B_j\neq \emptyset$, $\Delta \cap B_k\neq \emptyset$ and $\alpha \cdot dim(\Delta)\geq \min\{dim (F_{t_i}(B_j)), dim (F_{t_i}(B_k))\}$. 
		\end{itemize}


	\item[3)] Let us define a relation, $\sim_{t_i,\alpha}$ as follows.
		
Let $B_j\sim_{t_i,\alpha}B_k$ if $B_j,B_k$ belong to the same connected component of the graph $G_\alpha^{t_i}$. Then, $\sim_{t_i,\alpha}$ induces an equivalence relation.

	\item[4)] For every  $t\in [t_i,t_{i+1})$, $\theta_\alpha(t):=\theta_\alpha(t_{i-1})/\sim_{t_i,\alpha}$.
\end{itemize}

Condition $i)$ is the condition used in $SL$ $HC$ to define the graph. See Remark \ref{Remark: components}.

Condition $ii)$ is used to account for the chaining effect between two adjacent blocks. Suppose we have two adjacent blocks, densely packed, which are close to each other as sets but whose dense cores are distant apart as in Figure \ref{Ejp Chained edge}. Then, the dense cores will produce high dimensional simplices in the Rips complex while the connection between the blocks might be a low dimensional simplex. In this case,  condition $ii)$ will not be satisfied and the edge between the corresponding blocks in  $G_\alpha^{t_i}$ is not defined (although condition $i)$ holds). Hence, these blocks will not be merged.

\begin{obs}\label{Remark: necessary chain} Notice that if two points $x,x'$ belong to the same block  of $\theta_{\alpha}(t_i)$  then, necessarily, there exists a $t_i$-chain, $x=x_0,x_1,...,x_n=x'$ joining them. In particular, if $x_j\in B_j\in \theta_\alpha(t_{i-1})$, $j=0,...,n$, the corresponding edges $\{B_{j-1},B_j\}$, $j=1,n$, satisfies condition $ii)$. This is immediate by construction. 
\end{obs}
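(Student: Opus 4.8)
The plan is to argue by induction on $i$, establishing the slightly stronger statement that any two points lying in a common block of $\theta_\alpha(t_i)$ are joined by a $t_i$-chain that stays inside that block. The base case $i=0$ is immediate, since $\theta_\alpha(t_0)=\theta_\alpha(0)$ consists of singletons, so the only pair to consider is $x=x'$, joined by the trivial chain. Carrying the containment condition through the induction is what makes the recursive step work, so I would state it explicitly from the start.

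For the inductive step, recall from step 4) of the definition that $\theta_\alpha(t_i)=\theta_\alpha(t_{i-1})/\!\sim_{t_i,\alpha}$, so a block $B$ of $\theta_\alpha(t_i)$ is exactly the union of those blocks of $\theta_\alpha(t_{i-1})$ lying in a single connected component of $G_\alpha^{t_i}$. Given $x,x'\in B$, let $B(x),B(x')\in\theta_\alpha(t_{i-1})$ be the blocks containing them. Since these lie in the same connected component of $G_\alpha^{t_i}$, there is a path $B(x)=C_0,C_1,\dots,C_s=B(x')$ along edges of $G_\alpha^{t_i}$. Each edge $\{C_{l-1},C_l\}$ satisfies condition i), so I can choose $p_{l-1}\in C_{l-1}$ and $q_l\in C_l$ with $d(p_{l-1},q_l)\leq t_i$; these furnish the single steps of a $t_i$-chain that cross between consecutive blocks.

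It then remains to connect the entry and exit points inside each $C_l$, and this is where the inductive hypothesis is used: any two points of $C_l$ are joined by a $t_{i-1}$-chain lying in $C_l$, and since $t_{i-1}<t_i$ this is in particular a $t_i$-chain. Concatenating the internal chain from $x$ to $p_0$ in $C_0$, the crossing step $p_0\to q_1$, the internal chain from $q_1$ to $p_1$ in $C_1$, and so on, ending with $q_s\to x'$ in $C_s$, produces the required $t_i$-chain $x=x_0,\dots,x_n=x'$ inside $B$. Writing $B_j\in\theta_\alpha(t_{i-1})$ for the block containing $x_j$, every $B_j$ is one of the $C_l$, and whenever two consecutive chain points lie in distinct blocks $B_{j-1}\neq B_j$ the pair $\{B_{j-1},B_j\}$ is an edge of $G_\alpha^{t_i}$; since edges of $G_\alpha^{t_i}$ are defined to satisfy conditions i) and ii) simultaneously, such an edge satisfies condition ii), while for $B_{j-1}=B_j$ there is nothing to check. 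The only genuinely delicate point is bookkeeping in the inductive hypothesis, namely ensuring that the internal chains supplied for the blocks $C_l$ stay inside those blocks and use distance at most $t_{i-1}<t_i$; this is precisely what the strengthened induction statement guarantees, so no real obstacle arises and the remark is, as claimed, immediate by construction.
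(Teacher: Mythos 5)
Your proof is correct and takes essentially the approach the paper intends: the paper offers no argument beyond ``immediate by construction,'' and your induction on the recursion steps---crossing between blocks only along edges of $G_\alpha^{t_i}$ (which by definition satisfy both conditions $i)$ and $ii)$) and filling in with internal $t_{i-1}$-chains from the inductive hypothesis---is exactly the careful unwinding of that construction. The strengthened inductive statement (chains staying inside their block) is the right bookkeeping device and adds nothing beyond what the definition of $\theta_\alpha(t_i)=\theta_\alpha(t_{i-1})/\!\sim_{t_i,\alpha}$ already supplies.
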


\begin{figure}[ht]
\centering
\includegraphics[scale=0.4]{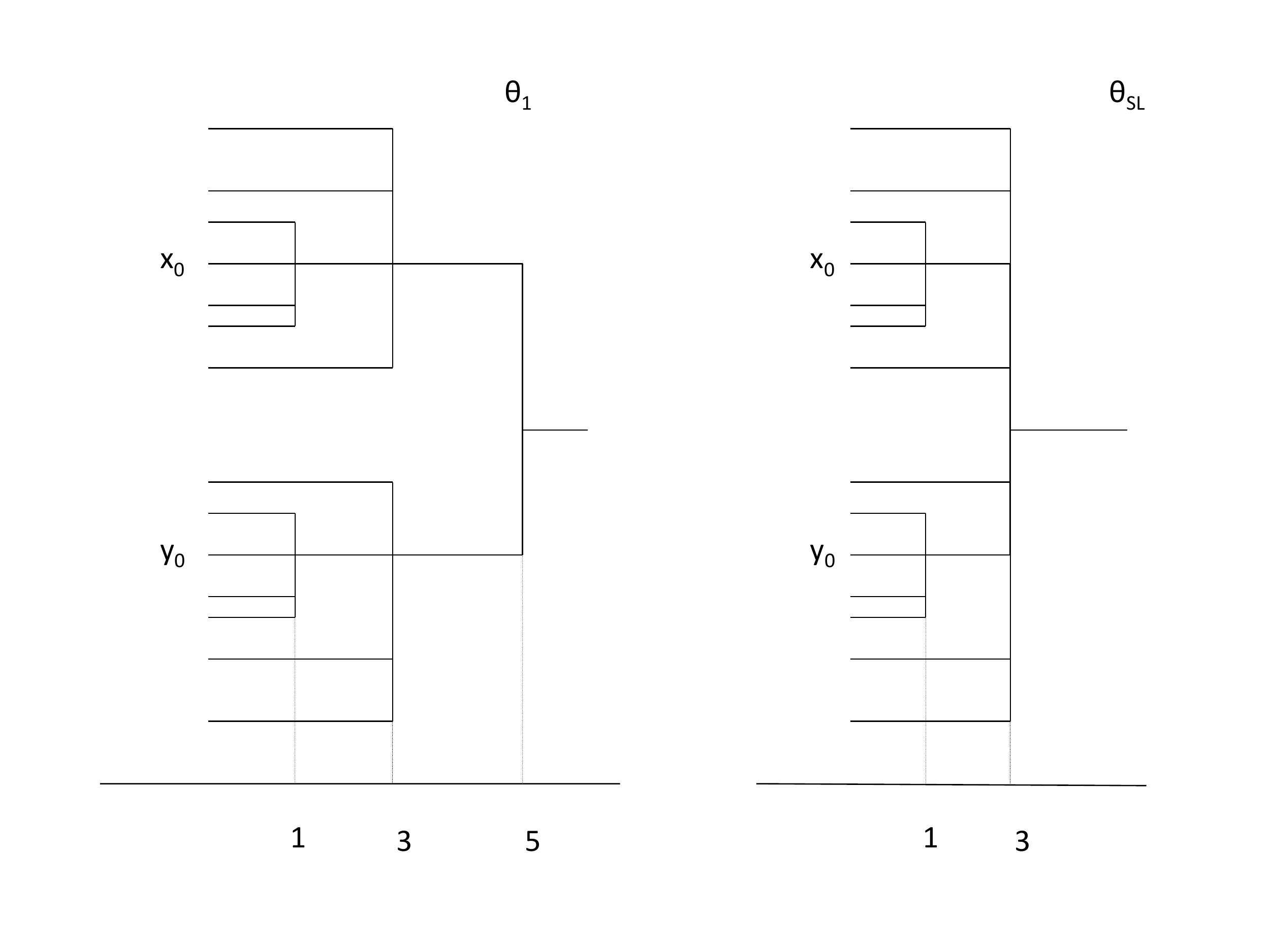}
\caption{Dendrogram produced by $SL(1)$ for the graph in Figure \ref{Ejp Chain_1b} compared with the corresponding dendrogram obtained applying $SL$ or $SL(\alpha)$ with $\alpha >3$}
\label{Chain_dendrogram_1}
\end{figure}

\begin{ejp}\label{Ejp: unchain 1} Let $(X,d)$ be the graph from Figure \ref{Ejp Chain_1b} and let $\alpha=1$. Notice that there are eight $1$-components, six of them are singletons and two of them, $N_1$, $N_2$, with $\#(N_1)=\#(N_2)=4$. Furthermore, $x_0\in N_1$, $y_0\in N_2$ and $dim\, F_{1}(N_s)=3$ for $s=1,2$. 
Then, let us check that applying $SL(1)$ on $(X,d)$ the clustering $\{B_1,B_2\}$ is detected. 

Let $\mathfrak{T}^{SL(1)}_\mathcal{D}(X,d)=\theta_1$. $\theta_1(t)=\{\{x_0\},...,\{x_{6}\},\{y_0\},...,\{y_{6}\}\}$ if $t<1$. It $1\leq t<3$, $\theta_1(t)=\{\{x_1\},\{x_2\},\{x_3\},N_1,N_2,\{y_1\},\{y_2\},\{y_3\}\}\}$. 

For $t=3$, there are edges in $G_1^3$ between every pair of clusters in $N_1, \{x_1\},\{x_2\},\{x_3\}$. Similarly, there are edges between every pair of clusters $N_2,\{y_1\},\{y_2\},\{y_3\}$. $F_3(N_1)$ and $F_3(N_2)$ have dimension 3 while the unique simplex in $F_3(X)$ intersecting both $N_1,N_2$ has dimension 1. Therefore,  condition $ii)$ induces a separation of the blocks $N_1, N_2$. Thus, $\theta_1(3)=\{B_1,B_2\}$.

For $t=4$, $dim(F_4(B_1))=6=dim(F_4(B_2))$ while the maximal dimension of a simplex intersecting both clusters is $4$. Then, by $ii)$, there is no edge in $G_1^4$ joining $B_1,B_2$. 

For $t=5$, $dim(F_5(B_1))=6=dim(F_5(B_2))$. Since the diameter of $N_1\cup N_2$ is 5, these vertices define a simplex $\Delta$ in $F_5(X)$ such that $dim(\Delta)=7$. Clearly, this simplex intersects $B_1$ and $B_2$.  Hence, $\theta_1(5)=\{X\}$. Therefore, the dendrogram obtained, $\theta_1$, is the one from Figure \ref{Chain_dendrogram_1}.

Modifying the parameter $\alpha$ we can adjust the method to be more or less sensitive to the chaining effect. Increasing $\alpha $ we would need higher dimensions in $F_{3}(N_s)$ to unchain blocks by condition $ii)$. Suppose $\alpha \geq 3$. In this case, $B_1,B_2$ would be joined by an edge in $G_\alpha^3$.  Thus, for $\alpha \geq 3$,  $\theta_\alpha(t)=\{\{x_0\},...,\{x_{6}\},\{y_0\},...,\{y_{6}\}\}$ if $t<1$, $\theta_\alpha(t)=\{\{x_1\},\{x_2\},\{x_3\},N_1,N_2,\{y_1\},\{y_2\},\{y_3\}\}\}$ if $1\leq t <3$ and $\theta_\alpha(t)=\{X\}$ if $t\geq 3$.
\end{ejp}

\begin{figure}[ht]
\centering
\includegraphics[scale=0.4]{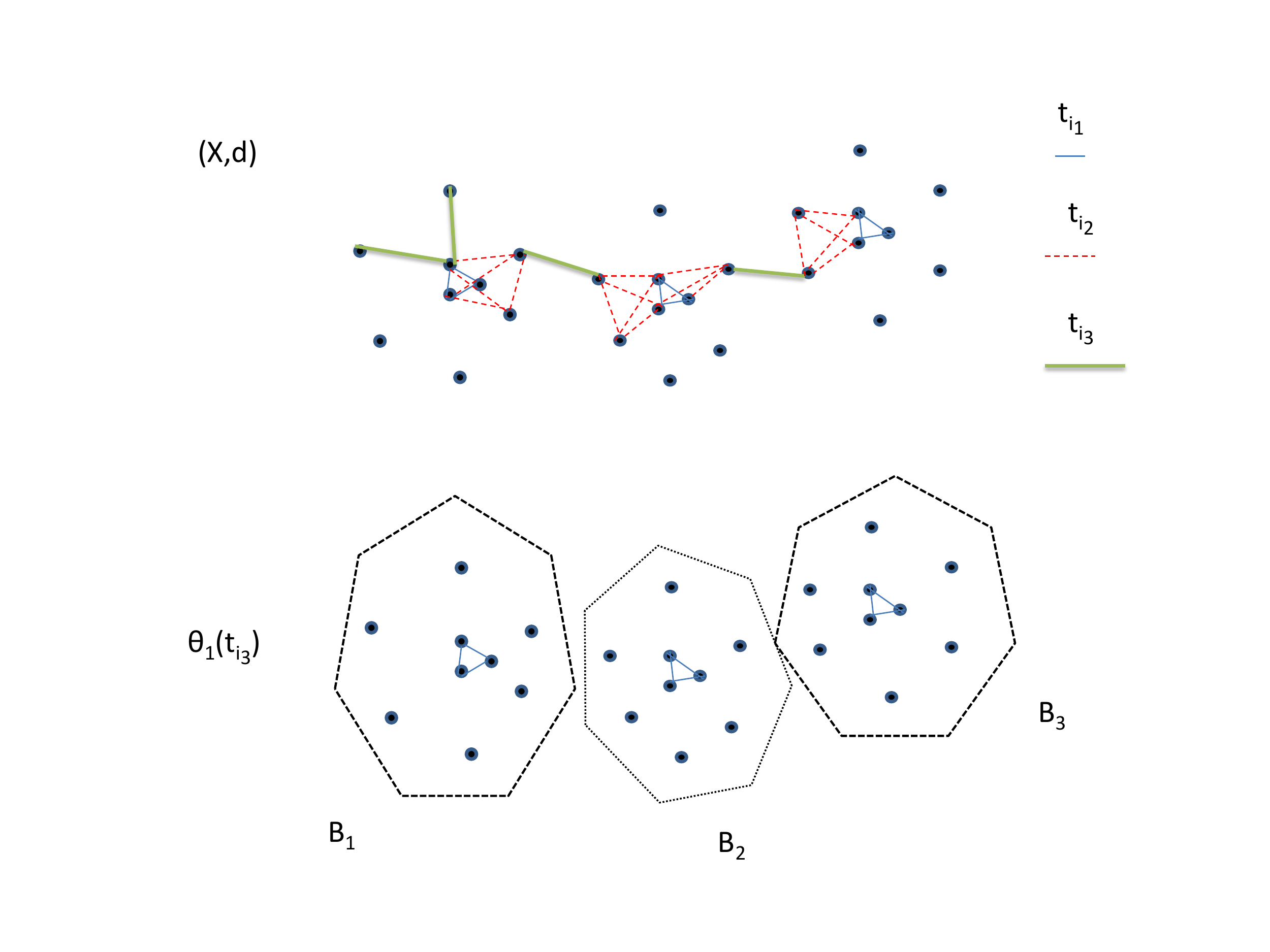}
\caption{For the set of points in the figure appears to be a natural clustering $\{B_1, B_2,B_3\}$.}
\label{Ejp SL}
\end{figure}

\begin{figure}[ht]
\centering
\includegraphics[scale=0.4]{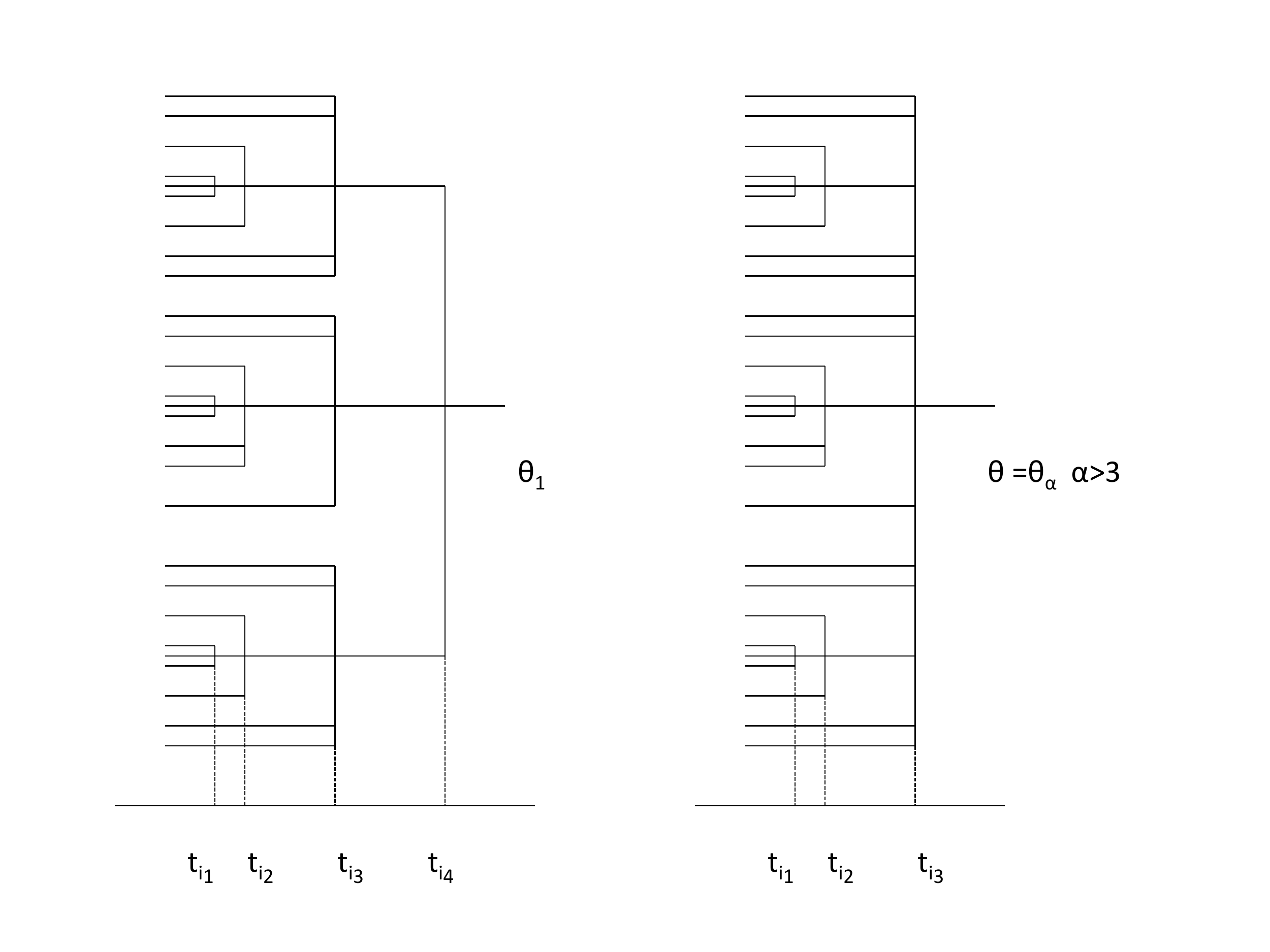}
\caption{Dendrograms $\theta_1$ and $\theta_\alpha$ with $\alpha>3$ for Example \ref{Ejp SL}.}
\label{Ejp SL(1)}
\end{figure}

\begin{ejp}\label{Ejp: unchain 2} Consider the set represented in Figure \ref{Ejp SL}. Let us consider three distances, $t_{i_1}<t_{i_2}<t_{i_3}$ which are represented, respectively, by a short segment, a dots line and a thick long segment. Let us assume that the sets $B_1,B_2,B_3$ are $(t_{i_3})$-connected and that $d(B_k,B_{k+1})= t_{i_3}$, $k=1,2$. Also, we can see that there exist 3-dimensional simplices in $F_{t_{i_2}}(X)$ inside $B_1$, $B_2$ and $B_3$. In Figure  \ref{Ejp SL(1)}, we represent the corresponding dendrograms for $SL(1)$ and $SL$. (Notice that, in this case, $SL=SL(\alpha)$ for any $\alpha >3$.) 

It is clear that, $SL$ $HC$ generates a dendrogram where it is impossible to detect the clustering $\{B_1,B_2,B_3\}$ because of the chaining effect. Introducing the parameter $\alpha =1$, in this example, we obtain a hierarchical clustering  which is consistent with the distribution of the sample.
\end{ejp}

\begin{figure}[ht]
\centering
\includegraphics[scale=0.3]{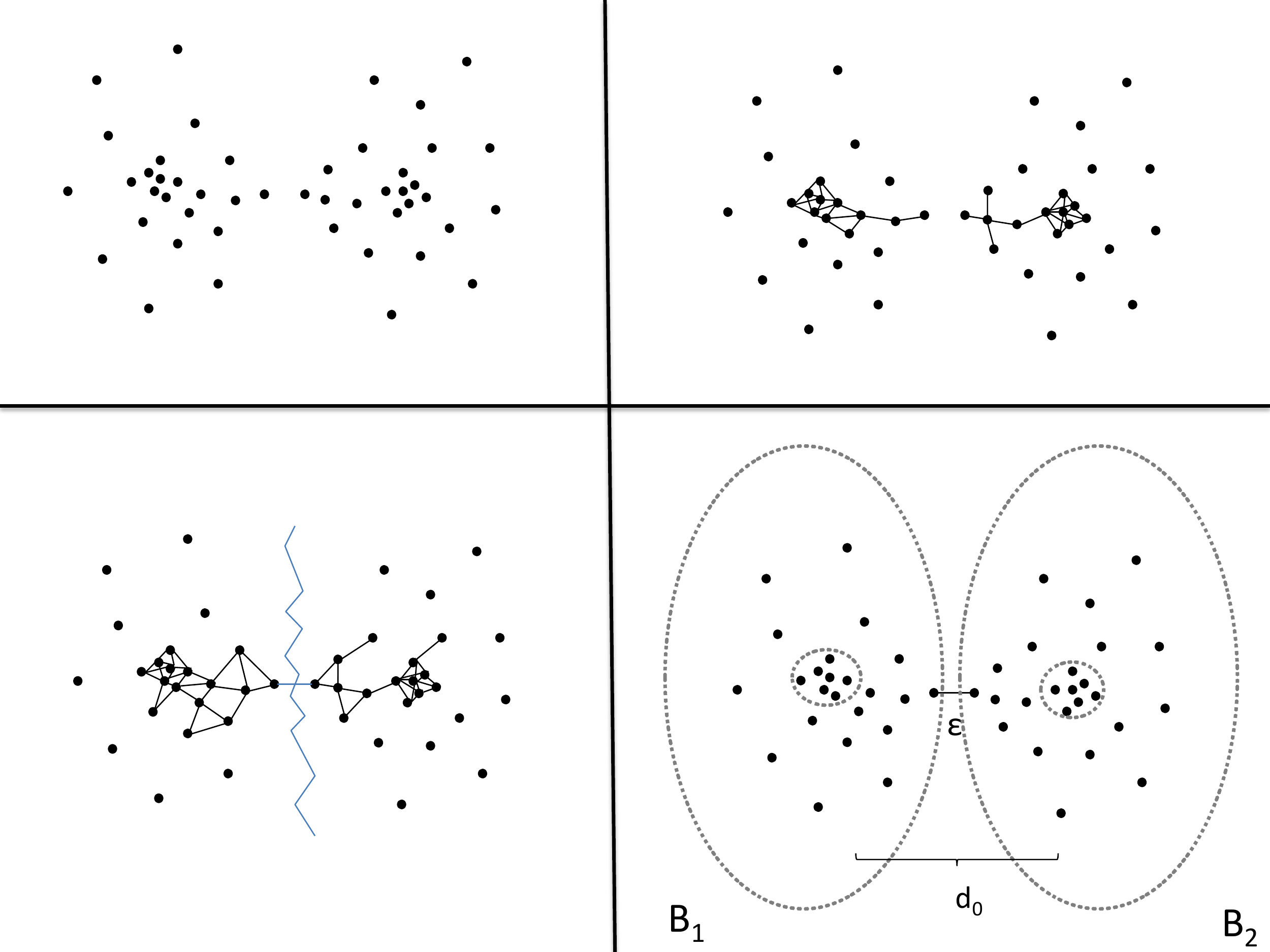}
\caption{The clustering $\{B_1,B_2\}$ is detected by $SL(3)$.}
\label{Ejp Chained_edge_2}
\end{figure}

\begin{ejp}\label{Ejp: chained_edge_2} Let $X$ be the set from Figure \ref{Ejp Chained_edge_2}. Suppose $\varepsilon=t_i$ and let us see what happens in the application of the algorithm $SL(3)$ until $t=t_{i-1}$. 

In the second square of the figure we can see the 1-dimensional skeleton of the Rips complex $F_{t_{i-1}}(X)$. As it is shown in the figure, there is no edge joining $B_1$ and $B_2$ yet and the dense cores inside $B_1,B_2$ produce high dimensional simplices in $F_{t_{i-1}}(X)$. Condition $ii)$ has not been applied yet and, therefore, the blocks are just the $t_{i-1}$-components. Let us call $N_1$, $N_2$ the corresponding blocks in $\theta_{3}$ defined by the nontrivial $t_{i-1}$components.

For $t_i=\varepsilon$ there is a single edge joining $N_1$ and $N_2$ as we can see in the third square. In particular, $\ell^{SL}(N_1,N_2)=\varepsilon$. However, the dimension of $F_{t_{i}}(N_s)$, $s=1,2$ is greater that 3. Then,  by condition $ii)$, there is no edge in the graph $G^{t_i}_3$ joining $N_1$ and $N_2$ and $\theta_{t_i}(X)$ refines the clustering $\{B_1,B_2\}$. In fact, the clustering from $\theta_3(t_i)$ is given by the connected components in the third square of the figure when the edge of length $\varepsilon$ joining $B_1,B_2$ is eliminated. 

Let $t_k=\min\{t \, | \, B_1,B_2 \mbox{ are $t$-connected}\}$. In the example, by condition $ii)$, for every $t_i< t_j\leq t_k$ and for any pair of blocks $C_1,C_2\in \theta_3(t_{j-1})$ with $C_1\in B_1$ and $C_2\in B_2$, there is no edge between them in $G^{t_j}_3$. Thus, $C_1,C_2$ are not merged in $\theta(t_j)$. Therefore, $\theta(t_k)=\{B_1,B_2\}$.
\end{ejp}

It may be noticed that $SL(\alpha)$ does not detect the possible clustering $\{B_1,B_2\}$ in the graph represented in Figure \ref{Ejp Chain_1}
. See Example \ref{Ejp: 1-1-chained} below. This illustrates the fact that our method does not consider directly the distribution of the points for a certain $t_i$. Instead of that, it focuses on the relations between the blocks from $\theta(t_{i-1})$.

\begin{figure}[ht]
\centering
\includegraphics[scale=0.3]{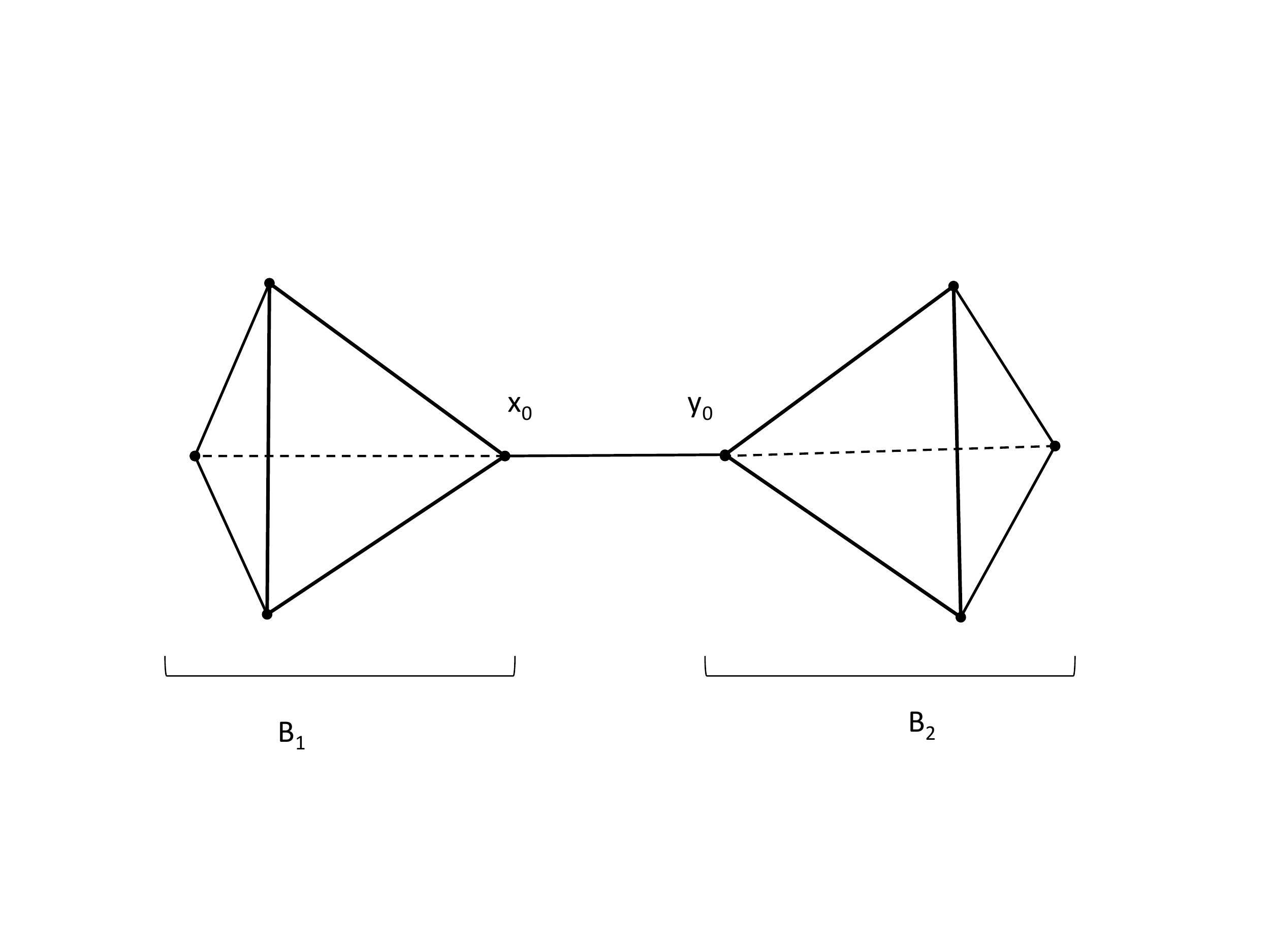}
\caption{The blocks $B_1,B_2$ have no dense cores. $SL(\alpha)$ does not separate them.}
\label{Ejp Chain_1}
\end{figure}

\begin{ejp}\label{Ejp: 1-1-chained}  Let $X_1$ be the graph from Figure \ref{Ejp Chain_1} where every edge has length $1$. 
Let us fix $\alpha=1$.

If $t<1$, $\theta_1(t)=\{\{x_0\},...,\{x_3\},\{y_0\},...\{y_3\}\}$. If $t=1$, in $ii)$ we consider the dimension of the complexes defined from the blocks in $t_0$ which are singletons. Therefore, every edge in the graph defines an edge in $G_1^{1}$.

Since $G_1^{1}$ is connected $\theta_1(1)=\{X_1\}$. 

For $\alpha>1$ also $SL(\alpha)=SL(1)=SL$.
\end{ejp}

\section{Chaining through smaller blocks: $SL^*(\alpha)$}\label{Section: SL^*}

\begin{figure}[ht]
\centering
\includegraphics[scale=0.4]{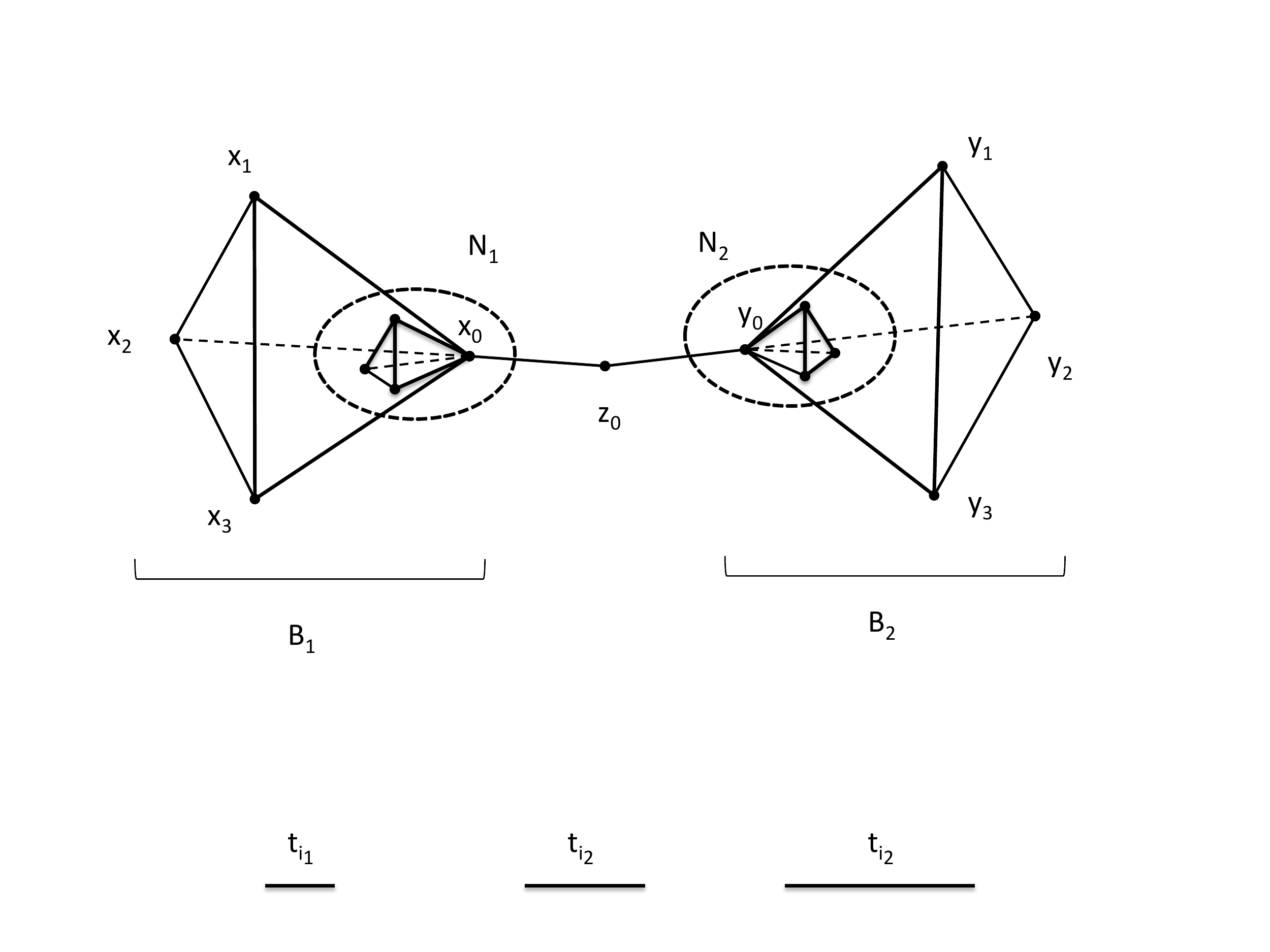}
\caption{Components $B_1$ and $B_2$ are chained through the smaller block $\{z_0\}$.}
\label{Ejp Chain_2b}
\end{figure}

The method $SL(\alpha)$ is defined to prevent two adjacent blocks with dense cores to be chained too soon when the minimal distance between them is small. Thus, condition $ii)$ considers the dimension of the Rips complex restricted to both blocks. However, any cluster $B\in \theta_\alpha(t_{i-1})$ and any isolated point $\{z\}\in \theta_\alpha(t_{i-1})$ such that $d(B,z)\leq t_i$ are going to be merged in $\theta_\alpha(t_i)$. Therefore, if two clusters are at a certain distance $\varepsilon$ from a single point then both blocks will be merged with this point in $\theta_\alpha(\varepsilon)$. Consequently, those clusters will be merged together in $\theta_\alpha(\varepsilon)$.

\begin{ejp} Consider the graph represented in Figure \ref{Ejp Chain_2b}.

Suppose that the edges in $N_1$,$N_2$ have length $1$, the edges $\{x_0,z_0\}$ and $\{z_0,y_0\}$ have length $2$ and the rest have length $3$. As we can see in the picture, there are 9 $1$-components, two of them, $N_1$ and $N_2$, have four points and the rest are singletons. The whole space is $3$-connected.

Let us fix any $\alpha\geq 1$. It is trivial to check that, $\theta_\alpha(1)=\{x_1,x_2,x_3,N_1,z_0, N_2,y_1,y_2,y_3\}$. Now, for $t=2$, since $\{z_0\}$ is a single point and $dim(F_2(\{z_0\}))=0$, condition $ii)$ is trivially satisfied. Therefore, there exist edges in $G_\alpha^2$ between $N_1$ and $z_0$ and between $z_0$ and $N_1$. Hence, $\theta_\alpha(2)=\{x_1,x_2,x_3,\{N_1\cup \{z_0\}\cup N_2\},y_1,y_2,y_3\}$.

Similarly, since every block in $\theta_\alpha(2)$ except from $\{N_1\cup z_0\cup N_2\}$ is a single point, condition $ii)$ always holds and $\theta_\alpha(3)=\{X\}$.
\end{ejp}

In general, for any pair of clusters $B_1,B_2\in \theta_\alpha(t_{i-1})$ such that $d(B_1,B_2)\leq t_i$ and $\#(B_2)\leq \alpha$ the dimension of the Rips complex restricted to $B_2$ is at most $\alpha-1$. Therefore, condition $ii)$ will not apply and the clusters will be merged. Thus, for any chain of clusters $B_0,...,B_n$ such that $\ell_{SL}(B_{i-1},B_i)=\varepsilon$ for every $i=1,...,n$ and $\#(B_j)\leq \alpha$ for every $j=1,...,n-1$, $B_0,...,B_n$ are merged together in $\theta_\alpha(\varepsilon)$. We call this chaining effect: \textbf{chaining through smaller blocks}.

Now, we are going to modify the algorithm so that it may distinguish the case when to blocks are chained by isolated points or small blocks. These points or small blocks might be considered as noise in the sample. See, for example, the point $z_0$ in Figure \ref{Ejp Chain_2b}.

To treat this effect we are going to focus in the ``big'' blocks. The selection is done depending on the parameter $\alpha$ (which defines the sensitivity of the whole method to chaining) and on the cardinality of the blocks involved. We use (\ref{Big blocks}) to fix the distinction between big blocks and small blocks.


Let $X=\{x_1,...,x_n\}$. Let $(D,<)=\{t_i \, : 1\leq i\leq n\}$ be the ordered set of distances between points of $X$.

Let the dendrogram defined by $SL^*(\alpha)$, $\mathfrak{T}^{SL^*(\alpha)}_\mathcal{D}(X,d)=\theta^*_{X,\alpha}$ or simply $\theta^*_\alpha$,  be as follows:


\begin{itemize}
	\item[1)] Let $\theta^*_\alpha(0):=\{\{x_1\},...,\{x_n\}\}$ and $\theta^*_\alpha(t):=\theta^*_\alpha(0)$ $\forall t< t_1$. 
	
	Now, given $\theta^*_\alpha[t_{i-1},t_i)=\theta^*(t_{i-1})=\{B_1,...,B_m\}$, we define recursively  $\theta^*_\alpha$ on the interval $[t_i,t_{i+1})$ as follows:

	\item[2)] Let $G_\alpha^{t_i}$ be a graph with vertices $\mathcal{V}(G_\alpha^{t_i}):=\{B_1,...,B_m\}$ and edges $\mathcal{E}(G_\alpha^{t_i}):=\{B_j,B_k\}$ such that the following conditions hold:
		\begin{itemize}
			\item[i)] $\min\{d(x,y)\, | \, x\in B_j,\ y\in B_k\}\leq t_i$.
			\item[ii)] there is a simplex $\Delta \in F_{t_i}(B_j\cup  B_k)$  such that $\Delta \cap B_j\neq \emptyset$, $\Delta \cap B_k\neq \emptyset$ and $\alpha \cdot dim(\Delta)\geq \min\{dim (F_{t_i}(B_j)), dim (F_{t_i}(B_k))\}$. 
		\end{itemize}

By an abuse of the notation, we may write $B$ to refer both to the block of $\theta(t_{i-1})$ and to the vertex of $G_\alpha^{t_i}$.

	\item[3)] Let us define a relation, $\sim_{t_i,\alpha}$ between the blocks as follows.

Let $cc(G_\alpha^{t_i})$ be the set of connected components of the graph $G_\alpha^{t_i}$. Let $A\in cc(G_\alpha^{t_i})$ with $A=\{B_{j_1},...,B_{j_r}\}$.

Let us call \textbf{big blocks} of $A$ those blocks such that
\begin{equation}\label{Big blocks} \alpha \cdot \#(B_{j_k})\geq \max_{1\leq l \leq r}\{ \#(B_{j_l})\}.\end{equation}

The rest of blocks of $A$ are called \textbf{small blocks}.

Let $H_\alpha(A)$ be the subgraph of $A$ whose vertices are the big blocks and $S_\alpha(A)$ be the subgraph of $A$ whose vertices are the small blocks.
	
Then, $B_{j_k}\sim_{t_i,\alpha} B_{j_{k'}}$ if one of the following conditions holds:

		\begin{itemize}
			\item[iii)] $\exists \, C\in cc(H_\alpha(A))$ such that $B_{j_k},B_{j_{k'}}\in C$. 
			\item[iv)] $B_{j_k}\in C \in cc(H_\alpha(A))$, $B_{j_{k'}}\in C'\in  cc(S_\alpha(A))$ and there is no big block in $A\backslash C$ adjacent to any block in $C'$.

		\end{itemize}

Then, $\sim_{t_i,\alpha}$ induces an equivalence relation whose classes are contained in the connected components of $G_\alpha^{t_i}$.

	\item[4)] For every  $t\in [t_i,t_{i+1})$, $\theta^*_\alpha(t):=\theta^*_\alpha(t_{i-1})/\sim_{t_i,\alpha}$.
\end{itemize}

Step $1)$ and $2)$ are the same as in $SL(\alpha)$.

By $iii)$, if two big blocks, $B,B'$, are joined by and edge in $G_\alpha^{t_i}$, then $B\sim_{\alpha,t_i}B'$. Thus, the connected components of big blocks are merged.

By $iv)$, a connected component of small blocks $C'$ is merged with a component of big blocks $C$ if $C$ is the unique component of big blocks adjacent to $C'$. Otherwise, the blocks of $C'$ stay as separated blocks in $\theta^*_\alpha(t_i)$. This can be seen also as follows. By $iv)$, if a small block is connected by chains of small blocks to two different components of big blocks we will consider it as a block apart in $\theta(t_{i})$. See Example \ref{Ejp: Big_small}.

\begin{figure}[ht]
\centering
\includegraphics[scale=0.4]{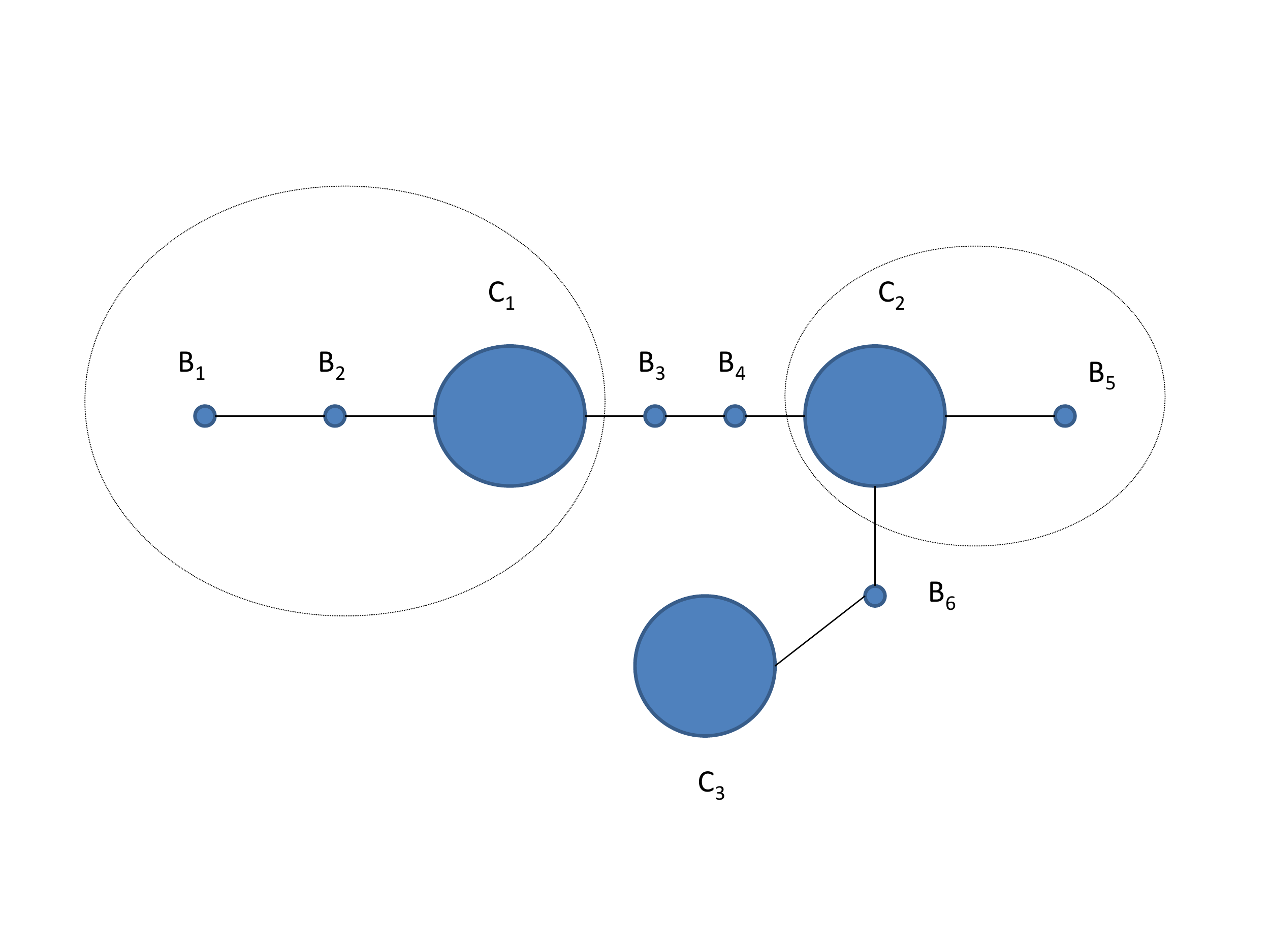}
\caption{Graph $G_\alpha^{t_i}$ with three connected components of big blocks, $C_i$, and six small blocks $B_j$.}
\label{Big_small}
\end{figure}

\begin{ejp}\label{Ejp: Big_small} Suppose $A\in cc(G_\alpha^{t_i})$ is as represented in Figure \ref{Big_small}: $H_\alpha(A)$ has three connected components, $C_i$, $i=1,3$ and $A\backslash H_\alpha(A)$ consists of  six small blocks $B_j$, $j=1,6$. The components $C_j$ are merged by $iii)$. The edges in the figure represent the resulting edges from $G_\alpha^{t_i}$ after identifying the components $C_i$ by $iii)$. 

Now, the component of small blocks formed by $B_1,B_2$ is only adjacent to $C_1$. Therefore, by $iv)$, $C_1\cup B_1\cup B_2$ is contained in some block of $\theta(t_i)$. The same happens with $B_5$ which is a component in $S_\alpha(A)$ which is  only adjacent to the component $C_2$. Thus, $C_2\cup B_5$ is contained in some block of $\theta(t_i)$. However, the component of small blocks given by $B_3,B_4$ is adjacent to two different components of big blocks, $C_1$ and $C_2$. Therefore, $B_3,B_4$ are independent blocks in $\theta_\alpha(t_i)$. The same happens with $B_6$. Thus, $\theta_\alpha(t_i)=\{\{C_1\cup B_1\cup B_2\},\{C_2\cup B_5\},\{C_3\},B_3,B_4,B_6\}$.
\end{ejp}

\begin{obs}\label{Remark: connected} At step $iii)$, if $H_{\alpha}(A)$ is connected, then $B_{i_1}\cup \cdots \cup B_{i_r}$ defines a block of $\theta_\alpha(t_i)$. 
\end{obs}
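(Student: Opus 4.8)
The plan is to show that the generated equivalence relation $\sim_{t_i,\alpha}$ collapses the whole component $A$ to a single class, treating big blocks and small blocks separately. Since $H_\alpha(A)$ is assumed connected, $cc(H_\alpha(A))$ consists of a single element, namely $C:=H_\alpha(A)$ itself. In particular, every big block of $A$ lies in this one component $C$, and hence $A\setminus C$ contains no big block at all. This last observation is what makes the hypothesis of condition $iv)$ trivially available.

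First I would record that $A$ always carries at least one big block: the block of maximal cardinality in $A$ satisfies (\ref{Big blocks}) because $\alpha\geq 1$, so it is big; thus $C$ is nonempty. Next, by condition $iii)$ all big blocks — all of which lie in the single component $C$ — are pairwise $\sim_{t_i,\alpha}$-equivalent and therefore merge into one class. It then remains to absorb the small blocks. Let $B$ be any small block of $A$ and let $C'\in cc(S_\alpha(A))$ be the small-block component containing it. Since every big block of $A$ lies in $C$, the set of big blocks in $A\setminus C$ is empty, so the requirement in condition $iv)$ that ``there is no big block in $A\setminus C$ adjacent to any block of $C'$'' holds vacuously. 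Consequently $iv)$ applies and every block of $C'$, in particular $B$, is $\sim_{t_i,\alpha}$-equivalent to the big blocks of $C$.

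Passing to the equivalence relation generated by $iii)$ and $iv)$, transitivity forces all small blocks, across all small-block components $C'$, into the same class as the big blocks of $C$. Hence the single equivalence class contains every block of $A$, and by step $4)$ the union of the blocks of $A$ is exactly one block of $\theta_\alpha(t_i)$, which is the assertion.

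The only point requiring genuine care is verifying that no small-block component can escape the merge, and this is precisely where the connectedness hypothesis on $H_\alpha(A)$ enters: it guarantees that $A\setminus C$ is free of big blocks, which is what renders the hypothesis of $iv)$ vacuous. I would additionally remark that, since $A$ is connected as a graph while $A\setminus C$ has no big blocks, each small-block component is in fact adjacent to $C$, so the merge is not merely formal. This reconciles the statement with the intuitive description following the definition, where $C'$ merges with $C$ exactly when $C$ is the unique big-block component adjacent to $C'$.
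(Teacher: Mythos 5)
Your proof is correct; the paper states this remark without proof, treating it as immediate from the construction, and your argument is exactly that routine verification: condition $iii)$ merges all big blocks (which all lie in the single component $C=H_\alpha(A)$), condition $iv)$ applies vacuously to every small-block component because $A\setminus C$ contains no big blocks, and transitivity then collapses all of $A$ into one class, which is precisely a block of $\theta_\alpha(t_i)$ since classes cannot extend beyond the connected component $A$. Your closing observation that connectedness of $A$ forces each small-block component to be genuinely adjacent to $C$ is a worthwhile addition, since it also covers the intended adjacency-based reading of $iv)$ described in the paper's commentary, not just the literal vacuous one.
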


\begin{obs} \label{Remark: necessary chain 2} Notice that Remark \ref{Remark: necessary chain} still applies. In fact, if two points $x,x'$ belong to the same block  of $\theta^*_{\alpha}(t_i)$  then, necessarily, there exists a $t_i$-chain, $x=x_0,x_1,...,x_n=x'$ joining them so that if $x_j\in B_j\in \theta^*_\alpha(t_{i-1})$, $j=0,...,n$, the corresponding edges $\{B_{j-1},B_j\}$, $j=1,n$, satisfy condition $ii)$.  
\end{obs}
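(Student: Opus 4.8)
The plan is to reduce the statement to Remark \ref{Remark: necessary chain}, observing that the passage from $SL(\alpha)$ to $SL^*(\alpha)$ changes only step $3)$, while both methods build the \emph{same} graph $G_\alpha^{t_i}$ in step $2)$. The single structural fact I would lean on is the one already recorded in the definition of $SL^*(\alpha)$: the relation $\sim_{t_i,\alpha}$ produced by conditions $iii)$ and $iv)$ has equivalence classes \emph{contained in} the connected components of $G_\alpha^{t_i}$. Granting this, the conclusion about chains follows from the very same concatenation argument used for $SL(\alpha)$, so the proof should be essentially immediate.

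Concretely, I would first let $x,x'$ lie in a common block of $\theta^*_\alpha(t_i)$ and let $B,B'\in\theta^*_\alpha(t_{i-1})$ be the blocks containing them. By step $4)$ this means $B\sim_{t_i,\alpha}B'$, and by the containment just recalled the vertices $B$ and $B'$ sit in a single connected component of $G_\alpha^{t_i}$. I would then pick any path $B=C_0,C_1,\dots,C_N=B'$ in $G_\alpha^{t_i}$; by construction each edge $\{C_{l-1},C_l\}$ satisfies conditions $i)$ and $ii)$.

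To produce the chain I would exploit that every block $C_l\in\theta^*_\alpha(t_{i-1})$ is $t_{i-1}$-connected (an immediate induction on $i$, since blocks are formed by merging $t_{i-1}$-close blocks of the previous level), so any two of its points are joined by a $t_{i-1}$-chain, a fortiori a $t_i$-chain; meanwhile condition $i)$ on each edge furnishes points $a_l\in C_{l-1}$, $b_l\in C_l$ with $d(a_l,b_l)\le t_i$. Concatenating the intra-block chains with these bridging steps yields a $t_i$-chain $x=x_0,\dots,x_n=x'$ whose consecutive points run through the blocks $C_0,\dots,C_N$; since each $\{C_{l-1},C_l\}$ is an edge of $G_\alpha^{t_i}$, the corresponding block transitions satisfy condition $ii)$, which is exactly the assertion.

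The only point requiring care---and the sole place where $SL^*(\alpha)$ could a priori differ from $SL(\alpha)$---is the containment of $\sim_{t_i,\alpha}$-classes inside the components of $G_\alpha^{t_i}$; this holds because conditions $iii)$ and $iv)$ are stated relative to a fixed component $A\in cc(G_\alpha^{t_i})$ and never merge blocks from different components. I expect no further obstacle: the statement is genuinely weaker than requiring the intermediate blocks $C_l$ to remain in the same $\theta^*_\alpha(t_i)$-block, so a path through $G_\alpha^{t_i}$ suffices even when it transits blocks that $SL^*(\alpha)$ ultimately keeps separate (for instance the small-block components $B_3,B_4,B_6$ of Example \ref{Ejp: Big_small}).
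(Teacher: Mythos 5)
Your proposal is correct and matches the paper's reasoning: the paper treats this remark as immediate by construction, and the construction it has in mind is precisely your argument---steps $1)$ and $2)$ of $SL^*(\alpha)$ coincide with those of $SL(\alpha)$, the $\sim_{t_i,\alpha}$-classes are contained in the connected components of $G_\alpha^{t_i}$, and a path in that graph concatenated with intra-block chains yields the required $t_i$-chain. Your closing observation that the statement only needs containment in components (not that the transit blocks end up in the same $\theta^*_\alpha(t_i)$-block) is exactly the point that makes Remark \ref{Remark: necessary chain} carry over unchanged.
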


\begin{figure}[ht]
\centering
\includegraphics[scale=0.3]{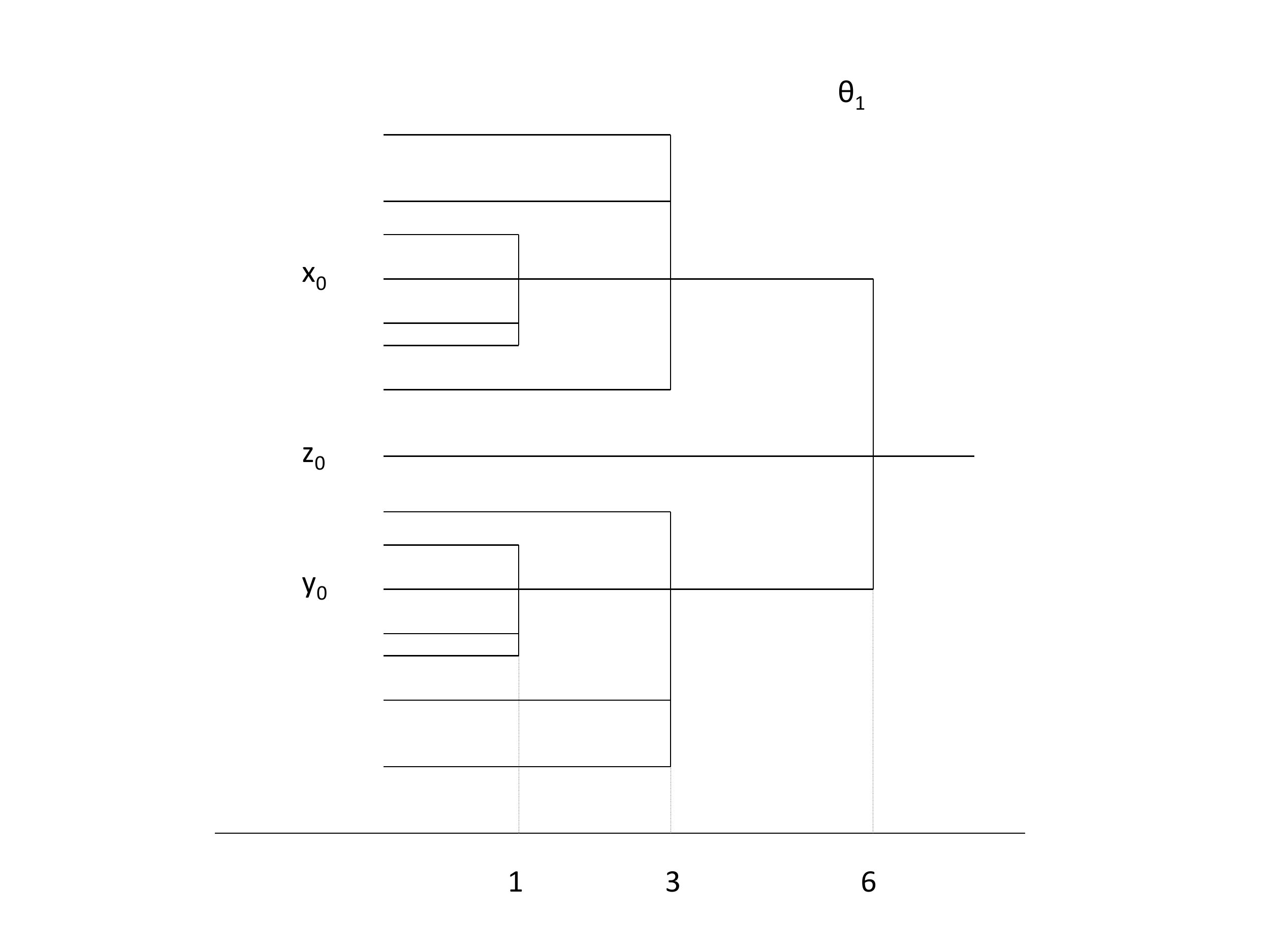}
\caption{Dendrogram produced by $SL^*(\alpha)$ for the graph in Figure \ref{Ejp Chain_2b}}
\label{Chain_dendrogram_2}
\end{figure}

\begin{ejp} \label{Ejp: unchain 1b} Let $(X,d)$ be the graph from Figure \ref{Ejp Chain_2b} and let $\alpha=1$. Then, let us check that applying $SL(1)$ on $(X,d)$ the dendrogram generated is the one from Figure \ref{Chain_dendrogram_2}. Clearly, $\theta_1(t)=\{\{x_0\},...,\{x_{6}\},\{z_0\},\{y_0\},...,\{y_{6}\}\}$ if $t<1$.  If $1\leq t<2$, $\theta_1(t)=\{\{x_1\},\{x_2\},\{x_3\},N_1,\{z_0\},N_2,\{y_1\},\{y_2\},\{y_3\}\}\}$. There are nine $1$-components, seven of them are singletons and two of them, $N_1$, $N_2$, with $\#(N_1)=\#(N_2)=4$. Furthermore, $x_0\in N_1$, $y_0\in N_2$ and $dim\, F_{1}(N_s)=3$ for $s=1,2$.

For $t=2$, conditions $i)$ and $ii)$ induce edges in $G_1^2$ between $N_1$ and $\{z_0\}$ and between $\{z_0\}$ and $N_2$. Then, $G_1^2$ has one component, $A$, which is not a single point: $A=\{N_1, \{z_0\},N_2\}$. $\{z_0\}$ is a single point and $\#N_1=\#N_2=4$. Then, there are two big blocks in $A$, $N_1$ and $N_2$, and one small block, $\{z_0\}$. Since the small block is connected to both big blocks, by condition $iv)$, these blocks are not merged. Thus, for every $1\leq t<3$, $\theta_1(t)=\{\{x_1\},\{x_2\},\{x_3\},N_1,\{z_0\},N_2,\{y_1\},\{y_2\},\{y_3\}\}$.

For $t=3$, there are edges in $G_1^3$ between $N_1$ and $x_i$ for every $1\leq i \leq 3$ and between $N_2$ and $y_i$ for every $1\leq i \leq 3$. Thus, $G_1^3$ is connected. Now, there are two big blocks, $N_1$, $N_2$ and 7 small blocks. By conditions $iii)$ and $iv)$, it is readily seen that $\theta_1(3)=\{B_1,\{z_0\},B_2\}$.

$\theta_1(t)=\{B_1,\{z_0\},B_2\}$ for every $t<6$. The minimal distance $t$ such that $B_1,B_2$ are connected by an edge in $G_1^t$ is $t=6$. $\theta_1(6)=\{X\}$. 
\end{ejp}


\section{Unchaining properties of $SL(\alpha)$}\label{Section: Unchaining properties}

In this section we try to give some theoretical background to the treatment of the chaining effect. Our intention, as it was mentioned above, is to define some concrete element to evaluate the sensitivity of a method to the type of chaining effect we are treating. First, we define the concept of \textit{chained subsets} and subsets \emph{chained by a single edge}. 
   
\begin{definicion} Let $X$ be a finite metric space. We say that two $b$-connected subsets of $X$, $B_1,B_2$, are $(a,b)$-\textbf{chained} if they hold that 
\begin{itemize}
	\item[i)] $\min\{t \ | \  B_1 \mbox{ is $t$-connected }\}=b $,
  \item[ii)] there exist $x_0\in B_1$ and $y_0\in B_2$ such that $d(x_0,y_0)= a \leq b$.
\end{itemize}  

If the parameters $a,\, b$ are not relevant, we say simply that $B_1,B_2$ are \textbf{chained}.
\end{definicion}

Notice that not every case of chained subsets is going to induce an ``undesired'' chaining effect. See Figure \ref{Ejp Chained_2}. The idea is that chained subsets include the cases of undesired chaining effect we are going to treat in this section.

\begin{figure}[ht]
\centering
\includegraphics[scale=0.3]{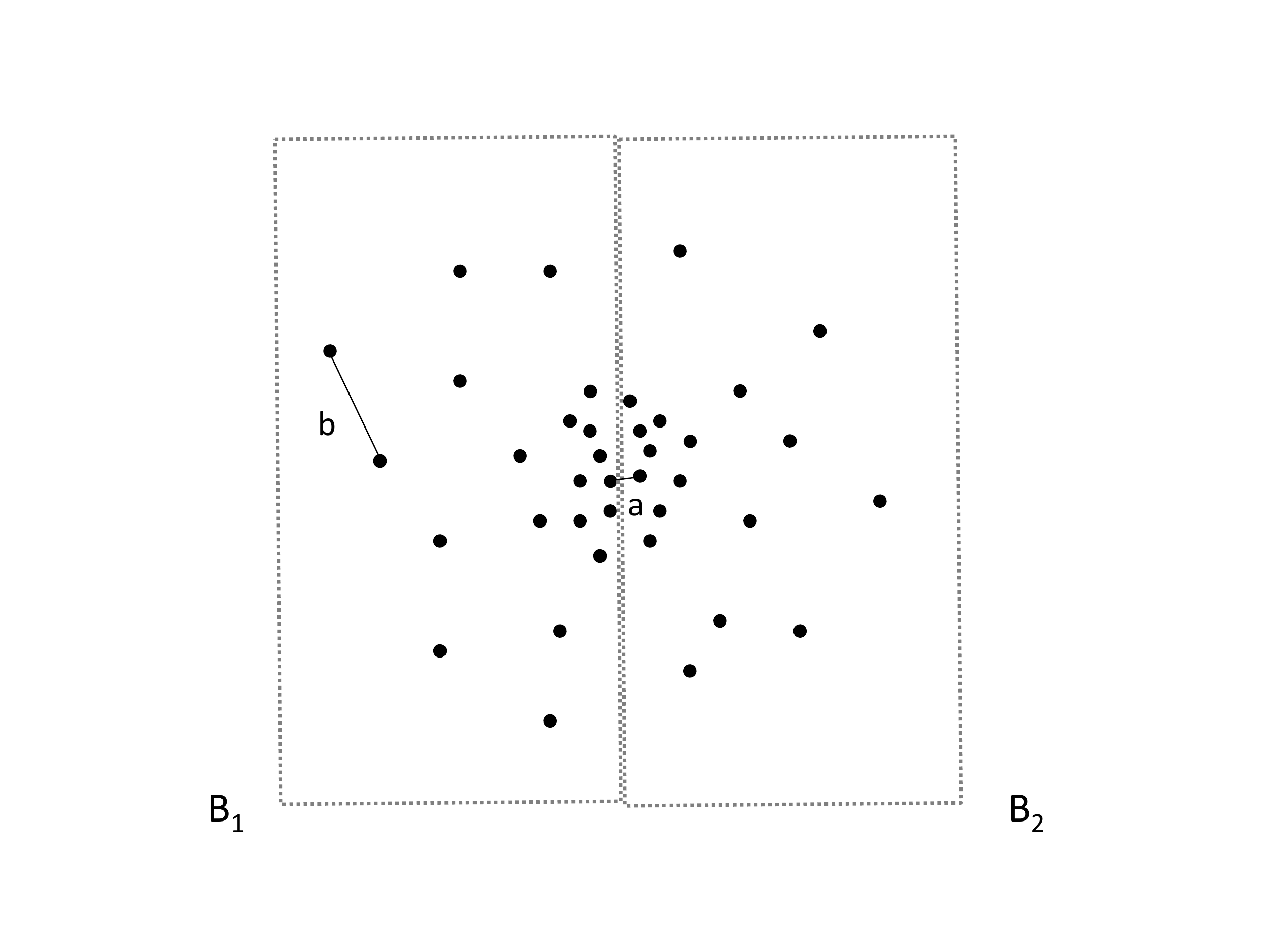}
\caption{$B_1$ and $B_2$ are $(a,b)$-chained subsets although there is no undesired chaining effect if these blocks are merged.}
\label{Ejp Chained_2}
\end{figure}

Inside the cases of chained subsets let us consider some specific type which are the subsets chained by a single edge. We define this as the prototypical case of chained subsets on which the algorithm should be tested to check its unchaining properties. 

\begin{definicion} Let $X$ be a finite metric space. We say that two $b$-connected subsets of $X$, $B_1,B_2$, are $(a,b)$-\textbf{chained by a single edge} if they hold that 
\begin{itemize}
	\item[i)] $\min\{t \ | \  B_1 \mbox{ is $t$-connected }\}=a $,
  \item[ii)] there exist $x_0\in B_1$ and $y_0\in B_2$ such that $d(x_0,y_0)= a \leq b$ 
  \item[iii)] $\forall (x_0,y_0)\neq (x,y) \in B_1\times B_2$, $d(x,y)>b$.
\end{itemize}  

If the parameters $a, \, b$ are not relevant, we say simply that $B_1,B_2$ are \textbf{chained by a single edge}.
\end{definicion}

Suppose that $X=B_1\cup B_2$ and $B_1,B_2$ are $(a,b)$-chained by a single edge. Notice that given the Rips complex $F_{b}(X)$ and the edge $e:=\{x_0,y_0\}\in F_{b}(X)$, then $F_{b}(X)\backslash \{e\}$ has exactly two connected components: $B_1$ and $B_2$. 


\begin{ejp} \label{Ejp: chain_0} The subsets $B_1,B_2$ from Figure \ref{Ejp Chained edge} are $(\varepsilon,t_i)$-chained by a single edge ($\{x_0,y_0\}$). 

$B_1,B_2$ are $t_i$-connected. There is a pair of points $x_0\in B_1$, $y_0\in B_2$ with $d(x_0,y_0)=\varepsilon <t_i$ and for any pair of points $x\in B_1$, $y\in B_2$, if $(x,y)\neq (x_0,y_0)$, then $d(x,y)>\varepsilon$.
\end{ejp}

\begin{ejp} \label{Ejp: chain} Consider the graph represented in Figure \ref{Ejp Chain_1b}. Suppose the edges in $N_1,N_2$ have length $1$ and the rest have length $3$. The distance between the vertices are measured as the length of the minimal path joining them. The whole space is $3$-connected with $d(x_1,x_2)=d(y_1,y_2)=3>1$.

Thus, $B_1$ and $B_2$ are $(3,3)$-chained by a single edge. 
\end{ejp}

\begin{definicion} Let $\mathfrak{T}$ be a HC method and $\mathfrak{T}_\mathcal{D}(X)=\theta$. We say that $\mathfrak{T}$ is \textbf{strongly chaining} if for any set $X$, any pair of chained subsets $B_1,B_2$ of $X$ and any $t>0$, if $B_1$ is contained in some block $B$ of $\theta(t)$, then $y_0\in B$. 
\end{definicion}

\begin{obs} It is immediate to check that $SL$ $HC$ is strongly chaining. Moreover, given any pair of $(a,b)$-chained subsets $B_1,B_2$ such that $X=B_1\cup B_2$  with $a<b$, $\{x_0,y_0\}$ is contained in some block of $\theta_{SL}(a)$ while $B_1$ is not contained in any block of $\theta_{SL}(t)$ for any $t\in [a,b)$. In particular, $\theta(t)$ does not refine $\{B_1,B_2\}$ for any $t\geq a$. 
\end{obs}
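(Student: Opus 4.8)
The plan is to reduce everything to the single observation that, for single linkage, ``$B_1$ is $t$-connected'' means precisely that $B_1$ is contained in a single $t$-component of $X$, i.e. in a single block of $\theta_{SL}(t)$, since $t$-connectedness is defined through $t$-chains in the ambient space $X$. Under this reading, condition i) in the definition of $(a,b)$-chained subsets says exactly that $b$ is the smallest value of $t$ for which $B_1$ sits inside one block of $\theta_{SL}(t)$.

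First I would prove that $SL$ is strongly chaining. Suppose $B_1$ is contained in some block $B$ of $\theta_{SL}(t)$. Then $B_1$ is $t$-connected, so by the minimality in condition i) we have $t\geq b\geq a$. Since $d(x_0,y_0)=a\leq t$, the pair $x_0,y_0$ forms a $t$-chain and hence $x_0$ and $y_0$ lie in the same $t$-component; as $x_0\in B_1\subseteq B$ and $B$ is that $t$-component, we conclude $y_0\in B$. This is exactly the defining property of a strongly chaining method, and the argument is independent of whether $X=B_1\cup B_2$.

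For the ``moreover'' part I would assume $X=B_1\cup B_2$ and $a<b$ and verify the three assertions in turn. The inclusion of $\{x_0,y_0\}$ in a block of $\theta_{SL}(a)$ is immediate, because $d(x_0,y_0)=a$ places $x_0$ and $y_0$ in the same $a$-component. The failure of $B_1$ to be contained in any block of $\theta_{SL}(t)$ for $t\in[a,b)$ is just a restatement of condition i): for $t<b$ the set $B_1$ is not $t$-connected, hence cannot lie inside a single $t$-component. Finally, for any $t\geq a$ the block of $\theta_{SL}(t)$ containing $x_0$ also contains $y_0$ (again because $d(x_0,y_0)=a\leq t$), so this block meets both $B_1$ and $B_2$; since $x_0\in B_1$ and $y_0\in B_2$ are distinct, no block of $\theta_{SL}(t)$ is contained in $B_1$ or in $B_2$ alone, and therefore $\theta_{SL}(t)$ does not refine $\{B_1,B_2\}$.

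I do not expect any real obstacle here: once the ambient meaning of $t$-connectedness is fixed, every step is a direct consequence of the description of the blocks of $\theta_{SL}(t)$ as $t$-components together with condition i). The only point requiring care is to use the minimality in condition i) in the right direction, namely that containment of $B_1$ in one block forces $t\geq b$ rather than the converse, which is precisely what yields $t\geq a$ and hence $y_0\in B$.
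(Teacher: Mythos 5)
Your proof is correct and is essentially the paper's own argument: the paper declares the remark immediate from the description of $\theta_{SL}(t)$ as the partition into $t$-components, and its proof of Theorem \ref{Th: strongly chaining} is the same mechanism in ultrametric language (containment of $B_1$ in a block forces $t\geq b\geq a=d(x_0,y_0)$, which forces $y_0$ into that block). Your explicit fixing of the ambient-space reading of $t$-connectedness and your verification of the refinement claim simply fill in the details the paper leaves unstated.
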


\begin{teorema}\label{Th: strongly chaining} Let $\mathfrak{T}$ be a hierarchical clustering method. If for every metric space $X$ and every $x,y,z,t \in X$, $u_{SL}(x,y)\leq u_{SL}(z,t)$ implies that $u(x,y)\leq u(z,t)$, then $\mathfrak{T}$ is strongly chaining. In particular, $SL$ $HC$ is strongly chaining.
\end{teorema}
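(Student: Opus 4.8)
The plan is to translate the whole statement into the language of the associated ultrametrics and then reduce the strongly chaining property to a three-term chain of inequalities. Recall from the correspondence $\eta$ that two points $p,q$ lie in the same block of $\theta(t)$ if and only if $u(p,q)\le t$, and similarly for $u_{SL}$ and the single linkage partition; moreover $u_{SL}(p,q)=\min\{s\mid p,q \text{ are } s\text{-connected}\}$, so a subset $B_1$ is $s$-connected exactly when $\max_{p,q\in B_1}u_{SL}(p,q)\le s$. With these translations in place the hypothesis ``$u_{SL}(x,y)\le u_{SL}(z,t)\Rightarrow u(x,y)\le u(z,t)$'' becomes directly usable.

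First I would extract two single linkage inequalities from the definition of $(a,b)$-chained subsets. From condition i), the identity $\min\{s\mid B_1 \text{ is } s\text{-connected}\}=b$ reads $\max_{p,q\in B_1}u_{SL}(p,q)=b$, and since $X$ is finite this maximum is attained, so there is a pair $p,q\in B_1$ with $u_{SL}(p,q)=b$. From condition ii), the pair $x_0\in B_1,\,y_0\in B_2$ with $d(x_0,y_0)=a\le b$ forms an $a$-chain, whence $u_{SL}(x_0,y_0)\le a\le b=u_{SL}(p,q)$.

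Next I would feed this comparison into the hypothesis of the theorem. Applying the assumed implication with $(x_0,y_0)$ in the role of $(x,y)$ and $(p,q)$ in the role of $(z,t)$, the inequality $u_{SL}(x_0,y_0)\le u_{SL}(p,q)$ gives $u(x_0,y_0)\le u(p,q)$. Now assume $B_1$ is contained in a block $B$ of $\theta(t)$ for some $t>0$. Since $p,q\in B_1\subseteq B$ they share a block, so $u(p,q)\le t$, and therefore $u(x_0,y_0)\le u(p,q)\le t$. This means $x_0$ and $y_0$ lie in the same block of $\theta(t)$; as $x_0\in B_1\subseteq B$, we conclude $y_0\in B$, which is precisely the strongly chaining property. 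The ``in particular'' clause is then immediate, since for $\mathfrak{T}=SL$ we have $u=u_{SL}$ and the hypothesis holds trivially by reflexivity of $\le$.

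The step requiring the most care, and the one I would write out most explicitly, is the very first translation: pinning down that ``$B_1$ is $s$-connected'' is measured by the \emph{ambient} single linkage ultrametric $u_{SL}$ restricted to $B_1$, so that condition i) genuinely produces a pair $p,q\in B_1$ attaining the value $b$ (and in particular $\ge a$). Once this is fixed, the rest is the short chain $u_{SL}(x_0,y_0)\le a\le b=u_{SL}(p,q)$ followed by one application of the monotonicity hypothesis and the block-membership reading of $u(p,q)\le t$; no further estimates are needed.
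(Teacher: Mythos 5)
Your proposal is correct and follows essentially the same route as the paper's proof: extract a pair in $B_1$ realizing the connectivity value $b$, compare it via $u_{SL}$ with the bridging pair $(x_0,y_0)$, apply the monotonicity hypothesis to pass to $u$, and read off block membership at scale $t$. Your version is in fact slightly more careful than the paper's, since you correctly note that condition ii) only yields $u_{SL}(x_0,y_0)\leq a$ (the paper asserts equality, which is harmless but not needed), and you make explicit both the ambient reading of $t$-connectedness and the trivial verification of the ``in particular'' clause for $SL$.
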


\begin{proof} First, let us see that $\mathfrak{T}$ is strongly chaining. Consider two $(a,b)$-chained subsets $B_1,B_2$. By hypothesis, there exist  $x_0\in B_1$, $y_0\in B_2$ such that $u_{SL}(x_0,y_0)=a$. Also, there exist $x_1,x_2\in B_1$ with $u_{SL}(x_1,x_2)= b\geq a$. Thus,  $u(x_1,x_2)\geq u(x_0,y_0)$. 

If $B_1$ is contained in some block $B$ of $\theta(t)$, then $t\geq u(x_1,x_2)\geq u(x_0,y_0)$ and $y_0\in B$.
\end{proof}

$AL$ and $CL$ $HC$ are not strongly chaining: 

\begin{ejp}\label{Ejp: not strong chaining} Consider the graph from Figure \ref{Ejp Chain_1b}. Suppose that, in addition, we include edges of length $3$ from $x_1,x_2,x_3$ to every vertex in $N_1$ and from $y_1,y_2,y_3$ to every vertex in $N_2$. Also, suppose that $d(x_0,y_0)=2.5$.

Thus, every pair of points in $N_1$ (resp. $N_2$) are at distance $1$, $d(x_i,x_j)=3$ (resp. $d(y_i,y_j)=3$) for every $i\neq j$, $i,j=0,3$, $d(x_i,x')=3$ for every $x'\in N_1$ and every $1\leq i \leq 3$, $d(y_i,y_j)=3$ for every $i\neq j$, $i,j=0,3$, $d(y_i,y')=3$ for every $y'\in N_2$ and every $1\leq j \leq 3$, $d(x_0,y_0)=2,5$ and $d(x,y)>3$ for every $(x_0,y_0)\neq (x,y)\in B_1\times B_2$. 

Then, $B_1$ and $B_2$ are $(2.5,3)$-chained subsets. However, $\theta_{AL}(1)=\theta_{CL}(1)=\{\{x_1\},\{x_2\},\{x_3\},N_1,N_2,\{y_1\},\{y_2\},\{y_3\}\}$ and $\theta_{AL}(3)=\theta_{CL}(3)=\{B_1,B_2\}$.
\end{ejp}

As we have seen in the examples above, $SL(\alpha)$ is able to detect some partitions of chained blocks. In particular, we have seen that it detects chained blocks when they have dense cores whose distance is greater than the minimal distance between the blocks. This is detected in the Rips complex (for some $t>0$) because there exist high dimensional simplices in both blocks while there is no high dimensional simplex intersecting both. 

Herein, we give sufficient conditions for blocks chained by a single edge to be detected. We also show how the hierarchical clustering is going to recover them. To formalize this we introduce the definition of \textit{weakly unchaining clustering method} and prove that $SL(\alpha)$ is weakly unchaining. 

We also check that other methods as $CL$ or $AL$ $HC$ are not weakly unchaining although, as we mentioned in Example \ref{Ejp: not strong chaining}, they are not strongly chaining either.

\begin{definicion} Let $\mathfrak{T}$ be a HC method and $\mathfrak{T}_\mathcal{D}(X)=\theta$. We say that $\mathfrak{T}$ is \textbf{weakly unchaining for the parameter $\alpha$} if the following implication holds: 

Let $X$ be a finite metric space such that $X=B_1\cup B_2$, with $B_1,B_2$ a pair of subsets $(t_j,t_i)$-chained by a single edge $\{x_0,y_0\}$. Suppose there exist $N_1\in B_1$, $N_2\in B_2$ such that 
	\begin{itemize}
		\item $N_s$ is contained in some block $B^{j-1}_s$ of $\theta(t_{j-1})$, $s=1,2$,
		\item $dim\, F_{t_{j}}(N_s)> \alpha$, $s=1,2$,
		\item $x_0\in N_1$, $y_0\in N_2$,
		\item $\sup_{x,x'\in B_1}\{d(x,x')\}\leq t_i$ and $\sup_{y,y'\in B_2}\{d(y,y')\}\leq t_i$. 
	\end{itemize}

Then, there exists $t>0$ such that $\theta(t)=\{B_1,B_2\}$. 

We say that $\mathfrak{T}$ is \textbf{weakly unchaining} if it is weakly unchaining for some parameter $\alpha$. 
\end{definicion}

\begin{nota} Notice that in the definition above we consider two chained subsets with further conditions. Therefore, if a $HC$ method is strongly chaining, in particular, it is not weakly unchaining.
\end{nota}


\begin{teorema}\label{Th: weakly} Let $X$ be a finite metric space such that $X=B_1\cup B_2$, with $B_1,B_2$ a pair subsets of $(t_j,t_i)$-chained by a single edge $\{x_0,y_0\}$. Suppose there exist $N_1\in B_1$, $N_2\in B_2$ such that 
	\begin{itemize}
		\item $N_s$ is contained in some block $B^{j-1}_s$ of $\theta(t_{j-1})$, $s=1,2$,
		\item $dim\, F_{t_{j}}(N_s)> \alpha$, $s=1,2$,
		\item $x_0\in N_1$, $y_0\in N_2$. 
	\end{itemize}

Then, $\theta_{\alpha}(t_i)$ refines $\{B_1,B_2\}$. If, in addition, $\sup_{x,x'\in B_1}\{d(x,x')\}\leq t_i$ and $\sup_{y,y'\in B_2}\{d(y,y')\}\leq t_i$, then $\theta_{\alpha}(t_i)=\{B_1,B_2\}$. 
\end{teorema}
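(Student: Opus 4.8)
The plan is to follow the algorithm scale by scale, tracking the block containing $x_0$ and the block containing $y_0$, and to show that the edge between them in $G_\alpha^{t_k}$ is never created for $t_j\le t_k\le t_i$, while no \emph{other} edge can cross the $B_1/B_2$ boundary either. The single structural fact I would isolate first is that, by conditions ii) and iii) of being $(t_j,t_i)$-chained by a single edge, the only pair $(x,y)\in B_1\times B_2$ with $d(x,y)\le t_i$ is $(x_0,y_0)$, at distance $t_j$. Combined with Remark \ref{Remark: necessary chain}, this already gives that $\theta_\alpha(t_{j-1})$ refines $\{B_1,B_2\}$: any $t_{j-1}$-chain has consecutive distances $\le t_{j-1}<t_j$ and so cannot step from $B_1$ to $B_2$, forcing each block of $\theta_\alpha(t_{j-1})$ to lie wholly in $B_1$ or in $B_2$.

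Then I would induct on the scales $t_j\le t_k\le t_i$ with hypothesis ``$\theta_\alpha(t_{k-1})$ refines $\{B_1,B_2\}$''. For blocks $C\subseteq B_1$ and $C'\subseteq B_2$ of $\theta_\alpha(t_{k-1})$, condition i) forces $x_0\in C$ and $y_0\in C'$ (otherwise every cross distance exceeds $t_i\ge t_k$), so the only candidate crossing edge joins the blocks $C_{x_0}$ and $C_{y_0}$. The heart of the argument is that condition ii) fails for this candidate. Any simplex $\Delta\in F_{t_k}(C_{x_0}\cup C_{y_0})$ meeting both blocks has a vertex $u\in B_1$ and a vertex $v\in B_2$ with $d(u,v)\le t_k\le t_i$; by the single-edge property $u=x_0$, $v=y_0$, and iterating this forces $\Delta=\{x_0,y_0\}$, i.e.\ $\dim\Delta=1$. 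On the other hand, since the dendrogram is nested, $\theta(t_{j-1})$ refines $\theta(t_{k-1})$, so $C_{x_0}$ contains the block $B_1^{j-1}\supseteq N_1$; hence $\dim F_{t_k}(C_{x_0})\ge \dim F_{t_k}(N_1)\ge \dim F_{t_j}(N_1)>\alpha$, and symmetrically for $C_{y_0}$ and $N_2$. Therefore $\alpha\cdot\dim\Delta=\alpha<\min\{\dim F_{t_k}(C_{x_0}),\dim F_{t_k}(C_{y_0})\}$, condition ii) fails, no crossing edge exists, and the connected components of $G_\alpha^{t_k}$ stay inside $B_1$ or inside $B_2$. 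This closes the induction and, at $k=i$, yields the first conclusion that $\theta_\alpha(t_i)$ refines $\{B_1,B_2\}$.

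For the equality I would supply the reverse inclusion from the diameter hypotheses. Since $\sup_{x,x'\in B_1}d(x,x')\le t_i$, the complex $F_{t_i}(B_1)$ is the full simplex on $B_1$; so for any two blocks $C_a,C_b\subseteq B_1$ of $\theta_\alpha(t_{i-1})$ condition i) is automatic, and the full simplex on $C_a\cup C_b$ serves as $\Delta$ in ii), giving $\alpha\cdot\dim\Delta\ge \#(C_a)+\#(C_b)-1\ge \min\{\#(C_a)-1,\#(C_b)-1\}$ because $\alpha\ge 1$. Thus all blocks inside $B_1$ are pairwise adjacent in $G_\alpha^{t_i}$, and since crossing edges are already excluded they form one connected component collapsing to $B_1$; the same applies to $B_2$, so $\theta_\alpha(t_i)=\{B_1,B_2\}$.

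I expect the main obstacle to be the joint bookkeeping in the inductive step, where three threads must be kept consistent at once: (a) that condition i) pins the only possible crossing edge to the $x_0$--$y_0$ pair of blocks, (b) that condition iii) forces the unique crossing simplex to be one-dimensional, and (c) that the density witnesses $N_1,N_2$ remain inside the growing blocks containing $x_0$ and $y_0$ by nestedness of the dendrogram. Ensuring $N_s$ never splits off from the current block carrying $x_0$ (resp.\ $y_0$) is the delicate point on which the whole dimension comparison rests.
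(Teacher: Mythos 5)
Your proof is correct and follows essentially the same route as the paper's: the crossing pair $(x_0,y_0)$ is the unique candidate edge by the single-edge hypothesis, condition ii) fails for the blocks containing $x_0$ and $y_0$ because their Rips complexes have dimension $>\alpha$ while the only crossing simplex is $1$-dimensional, and the diameter hypothesis then forces each $B_s$ to collapse to a single block at $t_i$. Your explicit induction with the nestedness argument (keeping $N_s$ inside the growing block via $C_{x_0}\supseteq B_1^{j-1}\supseteq N_1$) just spells out carefully what the paper compresses into ``the same argument works for every $0<k\leq i-j$.''
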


\begin{proof} Let us recall that, by definition, $t_{j-1}<t_{j}\leq t_{i}$.

For the first part it suffices to check that for every pair $(x,y)\in B_1\times B_2$, $\{x,y\}$ is not contained in any block of $\theta(t_i)$, this is, $u_{\alpha}(x,y)\geq t_i$. 

Let $(x,y)\in B_1\times B_2$. First, notice that for any $t<t_j$, there is no $t$-chain joining $x$ to $y$. Thus, $u_{\alpha}(x,y)\geq t_{j}$. Let us check that $u_{\alpha}(x,y)> t_{j+k}$, $k=0,i-j$. 

For $k=0$, since $x_0\in N_1\subset B^{j-1}_1$ and $y_0\in N_2\subset B^{j-1}_2$, condition $ii)$ implies that there is no edge in $G_\alpha^{t_j}$ between $B^{j-1}_1$ and $B^{j-1}_2$. Since $d(x_1,y_1)>t_{i}$ for every $(x_0,y_0)\neq (x_1,y_1)\in B_1\times B_2$ there is no $t_{i}$-chain joining $x$ to $y$ which does not contain the edge $\{x_0,y_0\}$. In particular, there is no $t_{j}$-chain joining $x$ to $y$ which does not contain the edge $\{x_0,y_0\}$. Therefore, by Remark \ref{Remark: necessary chain}, it follows that $u_{\alpha}(x,y)> t_{j}$. 

The same argument works for every $0<k\leq i-j$. Thus, $u_{\alpha}(x,y)> t_{i}$ and $\theta(t_{i})$ refines $\{B_1,B_2\}$. 

Suppose , in addition, that $\sup_{x,x'\in B_1}\{d(x,x')\}\leq t_i$ and $\sup_{y,y'\in B_2}\{d(y,y')\}\leq t_i$. We already proved that $\theta(t_i)$ refines $\{B_1,B_2\}$. Clearly, since $\sup_{x,x'\in B_1}\{d(x,x')\}\leq t_i$ (respectively, for $B_2$), all the blocks of contained in $B_1$ (resp. $B_2$) are joined by an edge in $G_{\alpha}^{t_i}$. Therefore, $B_1$ (resp. $B_2$) is a block of $\theta(t_i)$.
\end{proof}

\begin{cor}\label{Cor: weakly} $SL(\alpha)$ is weakly unchaining for the parameter $\alpha$. 
\end{cor}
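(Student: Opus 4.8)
The plan is to deduce Corollary \ref{Cor: weakly} directly from Theorem \ref{Th: weakly}, since the corollary is essentially just the statement that $SL(\alpha)$ satisfies the definition of being weakly unchaining for the parameter $\alpha$. First I would unfold the definition of \emph{weakly unchaining for the parameter $\alpha$}: I must verify that whenever $X=B_1\cup B_2$ with $B_1,B_2$ a pair of subsets $(t_j,t_i)$-chained by a single edge $\{x_0,y_0\}$, and whenever there exist $N_1\subset B_1$, $N_2\subset B_2$ contained in blocks $B^{j-1}_s$ of $\theta(t_{j-1})$ with $\dim F_{t_j}(N_s)>\alpha$, $x_0\in N_1$, $y_0\in N_2$, and with the additional diameter bounds $\sup_{x,x'\in B_1}\{d(x,x')\}\leq t_i$ and $\sup_{y,y'\in B_2}\{d(y,y')\}\leq t_i$, then there exists $t>0$ with $\theta(t)=\{B_1,B_2\}$.

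The key observation is that these hypotheses are \emph{exactly} the hypotheses of Theorem \ref{Th: weakly} (including the two diameter conditions of its second, stronger part). So I would simply invoke Theorem \ref{Th: weakly}: under the first three bullet conditions it gives that $\theta_\alpha(t_i)$ refines $\{B_1,B_2\}$, and under the additional diameter hypotheses it upgrades this to the equality $\theta_\alpha(t_i)=\{B_1,B_2\}$. Taking $t=t_i$ then furnishes the required value $t>0$ with $\theta(t)=\{B_1,B_2\}$, which is precisely what the definition of weakly unchaining for the parameter $\alpha$ demands. Hence $SL(\alpha)$ is weakly unchaining for the parameter $\alpha$.

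There is essentially no obstacle here: the corollary is a transcription of the conclusion of Theorem \ref{Th: weakly} into the language of the definition, and the only thing to be careful about is matching notation — confirming that the dendrogram $\theta$ appearing in the definition is $\theta_\alpha=\mathfrak{T}^{SL(\alpha)}_{\mathcal{D}}(X,d)$, and that the witness $t$ required by the definition can be taken to be the concrete level $t_i$ produced by the theorem. The proof is therefore a one-line deduction, writing \textbf{It is an immediate consequence of Theorem \ref{Th: weakly}}, with the two halves of the theorem supplying respectively the refinement and the exact equality $\theta_\alpha(t_i)=\{B_1,B_2\}$ that together certify the defining implication for the parameter $\alpha$.
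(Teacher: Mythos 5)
Your proposal is correct and matches the paper exactly: the paper treats this corollary as an immediate consequence of Theorem \ref{Th: weakly}, whose hypotheses (with the diameter conditions) coincide with those in the definition of weakly unchaining, and whose conclusion $\theta_\alpha(t_i)=\{B_1,B_2\}$ supplies the required witness $t=t_i>0$. Nothing further is needed.
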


See example \ref{Ejp: unchain 1}.

\begin{cor}\label{Cor: weakly_2} $SL^*(\alpha)$ is weakly unchaining for the parameter $\alpha$. 
\end{cor}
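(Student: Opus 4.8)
The plan is to reduce the statement to Theorem~\ref{Th: weakly} by exploiting the fact that $SL(\alpha)$ and $SL^*(\alpha)$ differ \emph{only} in the merging rule (step~$3$), while the graph $G_\alpha^{t_i}$ of step~$2$ is built identically and, crucially, Remark~\ref{Remark: necessary chain 2} guarantees for $\theta^*_\alpha$ exactly the ``necessary chain'' property that Remark~\ref{Remark: necessary chain} gives for $\theta_\alpha$. So I would check that the hypotheses of weak unchaining for $SL^*(\alpha)$ are precisely those of Theorem~\ref{Th: weakly} together with the diameter bounds $\sup_{x,x'\in B_1}d(x,x')\leq t_i$ and $\sup_{y,y'\in B_2}d(y,y')\leq t_i$, and then run the proof of Theorem~\ref{Th: weakly} with $\theta_\alpha$ replaced by $\theta^*_\alpha$, splitting into a refinement part and an equality part.

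For the refinement part I would show $\theta^*_\alpha(t_i)$ refines $\{B_1,B_2\}$ by the same induction over the levels $t_j,t_{j+1},\dots,t_i$ used in Theorem~\ref{Th: weakly}. At each level $t_l$ the only cross-pair within threshold is $\{x_0,y_0\}$, since every other pair of $B_1\times B_2$ has distance $>t_i$; hence the only simplex of $F_{t_l}(B_1\cup B_2)$ meeting both sides is the edge $\{x_0,y_0\}$, of dimension $1$. Because blocks only grow, the block of $\theta^*_\alpha(t_{l-1})$ containing $x_0$ contains $N_1$, so $\dim F_{t_l}(\cdot)\geq \dim F_{t_j}(N_1)>\alpha$; thus $\alpha\cdot 1<\min\{\dim F_{t_l}(\cdot),\dim F_{t_l}(\cdot)\}$ and condition $ii)$ fails, so no edge of $G_\alpha^{t_l}$ joins the $B_1$-side to the $B_2$-side. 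Invoking Remark~\ref{Remark: necessary chain 2} in place of Remark~\ref{Remark: necessary chain}, no pair $(x,y)\in B_1\times B_2$ lies in a common block, i.e. $u^*_\alpha(x,y)>t_i$. This step transfers essentially verbatim, as it never uses step~$3$.

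For the equality $\theta^*_\alpha(t_i)=\{B_1,B_2\}$ I would use the diameter hypotheses: since the diameter of $B_1$ is $\leq t_i$, the complex $F_{t_i}(B_1)$ is a full simplex, so any two blocks $B,B'\subseteq B_1$ of $\theta^*_\alpha(t_{i-1})$ satisfy both $i)$ and $ii)$ (take $\Delta$ to be the simplex spanned by $B\cup B'$, whose dimension dominates $\min\{\dim F_{t_i}(B),\dim F_{t_i}(B')\}$). Hence the blocks lying in $B_1$ form a complete, in particular connected, subgraph $A_1$ of $G_\alpha^{t_i}$, and likewise $A_2$ for $B_2$, with no edge between them by the refinement part. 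This is the single place where the $SL^*(\alpha)$ rule must be examined on its own: I would note that the largest block of $A_1$ is always big (for $\alpha\geq 1$, $\alpha\cdot\#(B)\geq\#(B)$), so $H_\alpha(A_1)$ is a nonempty complete graph, hence connected, and then apply Remark~\ref{Remark: connected} to conclude that all blocks of $A_1$ merge into the single block $B_1$; symmetrically $A_2$ merges into $B_2$.

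I expect the only genuine obstacle to be this last point: showing that conditions $iii)$ and $iv)$ do not fragment $A_1$ (or $A_2$) the way they are designed to fragment true chaining-through-small-blocks configurations. The resolution is exactly Remark~\ref{Remark: connected}: completeness of $A_1$ forces $H_\alpha(A_1)$ to be connected irrespective of the big/small classification, so $iv)$ never isolates a small block. Everything else follows mechanically from the proof of Theorem~\ref{Th: weakly}.
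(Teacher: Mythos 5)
Your proof is correct and follows exactly the route the paper intends for this corollary: rerun the argument of Theorem~\ref{Th: weakly} for $\theta^*_\alpha$, using Remark~\ref{Remark: necessary chain 2} in place of Remark~\ref{Remark: necessary chain} for the refinement part, and Remark~\ref{Remark: connected} (completeness of the subgraph on each side forces $H_\alpha(A)$ to be connected, so conditions $iii)$--$iv)$ cannot fragment it) for the equality part. The paper leaves these details implicit, and your write-up supplies precisely the two remarks the paper provides for this purpose, so there is nothing to correct.
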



\begin{obs}\label{Ejp: AL-CL} $AL$ and $CL$ $HC$ are not weakly unchaining.  

Consider the graph in Figure \ref{Ejp Chain_1b}. To check that $CL$ $HC$ is not weakly unchaining suppose that we add some edges between $N_1$ and $N_2$ so that $\forall \, (x_0,y_0)\neq (x,y)\in N_1\times N_2$, $d(x,y)=4$. 

Notice that this graph satisfies the conditions in the definition of weakly unchaining. Then, it suffices to check that $\theta_{CL}(t)$ is never $\{B_1,B_2\}$.

It is immediate to check that $\ell^{CL}(N_1,N_2)=4$. Then, it is readily seen that $\theta_{CL}(t)=\{x_1,x_2,x_3,N_1,N_2,y_1,y_2,y_3\}$ for every $1\leq t<4$ and $\theta_{CL}(4)=\{X\}$.  

To check that $AL$ $HC$ is not weakly unchaining suppose that in Figure \ref{Ejp Chain_1b}, we made $d(x_0,y_0)=3-\frac{3}{4}$.
Let us see that $\theta_{AL}(t)$ is never $\{B_1,B_2\}$.

First, notice that this graph satisfies the conditions in the definition of weakly unchaining. 
Also, it is immediate to check that $\ell^{AL}(N_1,N_2)=\frac{3}{4}+3=\ell^{AL}(x_i,N_1)=\ell^{AL}(y_j,N_2)$, $i,j=1,3$. Thus,  it is readily seen that $\theta_{AL}(t)=\{x_1,x_2,x_3,N_1,N_2,y_1,y_2,y_3\}$ for every $1\leq t<3+\frac{3}{4}$ and $\theta_{AL}(3+\frac{3}{4})=\{X\}$.
\end{obs}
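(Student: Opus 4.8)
The plan is to show that neither $AL$ nor $CL$ can be weakly unchaining for any parameter $\alpha$ by producing, for each $\alpha$, a single space $X=B_1\cup B_2$ that meets all the hypotheses of the definition and yet whose dendrogram never displays the partition $\{B_1,B_2\}$. I would build $X$ from the template of Figure \ref{Ejp Chain_1b}: two dense cores $N_1\subset B_1$, $N_2\subset B_2$ realised as cliques so that $dim\, F_{t_j}(N_s)>\alpha$ (taking cliques on more than $\alpha+1$ points to cover an arbitrary $\alpha$), a handful of peripheral points attached to each core, and a single bridge $\{x_0,y_0\}$ supplying the chaining. The guiding idea is that, unlike $SL(\alpha)$, both $AL$ and $CL$ compare the two cores through averaged or maximal cross-distances; so I can inflate the core-to-core separation above the bridge length and arrange it to coincide exactly with the level at which the peripheral points are absorbed into their own core. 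When those two levels agree, the cross-merge of $N_1$ with $N_2$ and the consolidation of each $B_s$ happen at the same instant, so the intermediate partition $\{B_1,B_2\}$ is jumped over.

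Concretely, for $CL$ I would complete the bipartite graph between $N_1$ and $N_2$ with edges of a single large length, chosen so that $\ell^{CL}(N_1,N_2)$ equals the peripheral-to-core value $\ell^{CL}(\{x_i\},N_1)$ (and symmetrically for $N_2$). Running the recursive merges, the partition stays at $\{x_1,x_2,x_3,N_1,N_2,y_1,y_2,y_3\}$ until that common value is reached, at which point the attachment of every peripheral to its core occurs simultaneously with the core-to-core merge, collapsing the whole space to $\{X\}$ in a single step. For $AL$ the tuning is placed on the bridge instead: I would lower $d(x_0,y_0)$ to the precise value for which the mean $\ell^{AL}(N_1,N_2)$ over all core-to-core pairs equals the averaged peripheral-to-core distance $\ell^{AL}(\{x_i\},N_1)$, so that the single short bridge drags the core-to-core average down to exactly the consolidation level and again forces a simultaneous merge.

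I expect the main obstacle to be the joint verification that such an $X$ genuinely satisfies every clause of the weak-unchaining hypothesis while still exhibiting the linkage coincidence. The definition forces the bridge length to equal the connectivity threshold of $B_1$, requires all remaining cross-pairs to exceed $t_i$, and at the same time bounds the diameters of $B_1$ and $B_2$ by $t_i$; these requirements pull in opposite directions, so the cross-distances and the peripheral attachments have to be balanced with care, and the averaging identity for $AL$ must be recomputed whenever the cores are enlarged to accommodate a larger $\alpha$. Once the distances are pinned down and the hypotheses are checked, confirming that $\theta_{AL}(t)$ and $\theta_{CL}(t)$ are never $\{B_1,B_2\}$ reduces to the monotone bookkeeping of linkage values sketched above.
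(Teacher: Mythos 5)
Your proposal is correct and follows essentially the same route as the paper's own argument: for $CL$ you fill in the $N_1\times N_2$ cross-distances with one common large value equal to the peripheral-to-core value $\ell^{CL}(\{x_i\},N_1)$ (the paper uses $4$), and for $AL$ you shorten the bridge so that $\ell^{AL}(N_1,N_2)$ equals $\ell^{AL}(\{x_i\},N_1)$ (the paper uses $d(x_0,y_0)=3-\tfrac{3}{4}$, giving common value $3+\tfrac{3}{4}$), so in both cases the core-to-core merge and the peripheral absorptions occur at the same level and the dendrogram jumps over $\{B_1,B_2\}$. The only difference is that you explicitly enlarge the cliques to handle an arbitrary parameter $\alpha$ (needed since being ``not weakly unchaining'' quantifies over all $\alpha$), a point the paper's fixed four-point cores leave implicit.
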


DBSCAN is a density-based algorithm for clustering. See \cite{EKSX}. Although this is not a hierarchical clustering method it is worth analyzing the chaining effect and comparing its results with $SL(\alpha)$. 

DBSCAN requires two parameters: some distance $\varepsilon>0$ and a minimal number of points $minPts$. A point is a \textit{core point} of a cluster if there are at least $minPts$ in its $\varepsilon$-neighborhood. Then, the \textit{density-reachable} points from a core point define a cluster. Let us recall here the formal definition.

A \textit{$\varepsilon$-neighbourhood }of a point $p$, $N_\varepsilon(p)=\{q\in X \, | \, d(p,q)\leq \varepsilon\}$. 

A minimum number of points, $minPts$ is defined so that if $N_\varepsilon(p)$ has at least $minPts$, then $p$ is a core point of a cluster.

A point $p$ is \textit{directly density-reachable} from a point $q$ with respect to $\varepsilon$, $minPts$ if 

1) $p\in N_\varepsilon(q)$ and

2) $|N_\varepsilon(q)|\geq minPts$

A point $p$ is \textit{density-reachable} from a point $q$ with respect to $\varepsilon$, $minPts$ if there is a chain of points $q=p_1,...,p_n=p$ such that $p_{i+1}$ is directly density-reachable from $p_i$.

A point is \textit{density-connected }to a point $q$ with respect to $\varepsilon$, $minPts$ if there is a point $o$ such that both, $p$ and $q$ are density-reachable from $o$ with respect to $\varepsilon$, $minPts$.

A cluster is defined to be a set of density connected points which is maximal with respect to density-reachability. The points which do not belong to any clusters are considered noise.

One of the advantages of DBSCAN is that it is capable of detecting isolated points and eliminating them as noise. Also, the chaining effect through a chain of points is reduced.

In fact, the type of unchaining DBSCAN does is more related to the chaining through smaller blocks. It is not so effective to detect the chaining effect produced between two blocks when the minimal distance between them is small. 


In general, suppose $X=B_1\cup B_2$ with $B_1,B_2$ two clusters $(a,b)$-chained by a single edge $\{x_0,y_0\}$. Let us assume that $\varepsilon=a$ and that $x_0,y_0$ are core points. Then, $x_0,y_0$ are density connected and they belong to the same cluster in the output of DBSCAN. Therefore, the clustering $\{B_1,B_2\}$ is not  detected by DBSCAN. See example \ref{Ejp: DBSCAN}. However, if $x_0,y_0$ belong to simplices with dimension at least $\alpha$ in $F_t(B_1)$, $F_t(B_2)$ respectively for some $t<a$, then $SL(\alpha)$  detects this clustering. In fact: $\theta_\alpha(b)=\{B_1,B_2\}$. 

\begin{ejp}\label{Ejp: DBSCAN} Let us analyze the case of example \ref{Ejp: chain}. Let $\varepsilon=3$ and $minPts=4$. Then, notice that $x_0$, $y_0$ are core points. Therefore, since $d(x_0,y_0)=3$, they are density-connected. Thus, DBSCAN does not detect the clustering $\{B_1,B_2\}$.,
\end{ejp}

\section{Unchaining properties of $SL^*(\alpha)$}\label{Section: Unchaining properties 2}

\begin{definicion} Let $X$ be a finite metric space, $B_0,...,B_k$ be $b$-connected subsets of $X$ and $a\leq b$. We say that  $B_0$ and $B_{k}$, are $(a,b)$-\textbf{chained through $\alpha$-smaller blocks} if the following conditions hold 
\begin{itemize}
	\item[i)] $\min\{t \ | \ x\sim_t y \ \forall x,y\in B_0\}=b $, 
  \item[ii)] there exists a $a$-chain  $x_0,...,x_k$ with $x_s\in B_s$ for every $s=0,k$
  \item[iii)] $\forall \, (x,y) \in B_{0}\times B_k$, $d(x,y)>b$.
  \item[iv)] $ \alpha \cdot \#(B_{s})<\min\{\#(B_1),\#(B_k)\}$ for every $1\leq s \leq k-1$.
\end{itemize}  

If the parameters $a,b,\alpha$ are not relevant, we simply say that $B_1,B_2$ are \textbf{chained through smaller blocks}.
\end{definicion}

\begin{definicion} Let $\mathfrak{T}$ be a strongly chaining $HC$ method and $\mathfrak{T}_\mathcal{D}(X)=\theta$. We say that $\mathfrak{T}$ is \textbf{completely chaining} if for any set $X$, any pair of components $B_0,B_k$ of $X$ chained through smaller blocks and any $t>0$, if $B_0$ is contained in some block $B$ of $\theta(t)$, then $\{x_0,...,x_k\}\in B$. 
\end{definicion}

\begin{teorema}\label{Th: complete chaining} Let $\mathfrak{T}$ be a hierarchical clustering method. If for every metric space $X$ and every $x,y,z,t \in X$, $u_{SL}(x,y)\leq u_{SL}(z,t)$ implies that $u(x,y)\leq u(z,t)$, then $\mathfrak{T}$ is completely chaining. In particular, $SL$ $HC$ is completely chaining.
\end{teorema}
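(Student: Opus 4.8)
The plan is to split the statement into the two requirements hidden in the definition of completely chaining: that $\mathfrak{T}$ be strongly chaining, and the supplementary clause about subsets chained through $\alpha$-smaller blocks. The first requirement is free, since the hypothesis here is verbatim the hypothesis of Theorem \ref{Th: strongly chaining}; hence $\mathfrak{T}$ is already strongly chaining and I only need to establish the extra clause. The whole point is that the same order-preservation trick used for strongly chaining transports along an $a$-chain.

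So suppose $B_0,\dots,B_k$ witness that $B_0,B_k$ are $(a,b)$-chained through $\alpha$-smaller blocks, with $a$-chain $x_0,\dots,x_k$ where $x_s\in B_s$. First I would extract from condition (i) a pair $p,q\in B_0$ realizing the internal single-linkage diameter: since $\min\{t\mid x\sim_t y\ \forall x,y\in B_0\}=b$ and $B_0$ is finite, the connectivity threshold of $B_0$ equals $\max_{x,y\in B_0}u_{SL}(x,y)$, which is attained, so there exist $p,q\in B_0$ with $u_{SL}(p,q)=b$. On the other hand, every consecutive pair of the chain satisfies $u_{SL}(x_s,x_{s+1})\leq d(x_s,x_{s+1})\leq a\leq b=u_{SL}(p,q)$.

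Next I would apply the hypothesis directly: from $u_{SL}(x_s,x_{s+1})\leq u_{SL}(p,q)$ we get $u(x_s,x_{s+1})\leq u(p,q)$ for every $s$. Now assume $B_0$ is contained in some block $B$ of $\theta(t)$. Then $p,q\in B$, so $u(p,q)\leq t$, whence $u(x_s,x_{s+1})\leq t$ for all $s$. By the defining property of $u$ this means that $x_s$ and $x_{s+1}$ lie in the same block of $\theta(t)$ for each $s$; starting from $x_0\in B_0\subseteq B$ and using transitivity along the chain gives $x_1,\dots,x_k\in B$, that is $\{x_0,\dots,x_k\}\subseteq B$, which is exactly the completely-chaining conclusion. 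For the ``in particular'' clause I would simply note that $SL$ $HC$ satisfies the hypothesis trivially, as there $u=u_{SL}$.

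I do not expect a genuine obstacle here; the one point to be careful about is the extraction of $p,q$ with $u_{SL}(p,q)=b$, which rests on the standard equivalence between $t$-connectedness of a finite set and the bound $u_{SL}\leq t$ on all of its pairs. It is worth emphasizing that conditions (iii) and (iv) of the definition of chaining through smaller blocks are never invoked in this direction; they only serve to make that notion restrictive, while the argument itself is just the order-preservation reasoning of Theorem \ref{Th: strongly chaining} propagated step by step along the $a$-chain.
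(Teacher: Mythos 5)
Your proof is correct and follows essentially the same route as the paper's: invoke Theorem \ref{Th: strongly chaining} to get the strongly chaining requirement, extract a pair in $B_0$ realizing $u_{SL}=b$, compare it against the chain pairs via the order-preservation hypothesis, and conclude that any block of $\theta(t)$ containing $B_0$ must contain the whole chain. The only cosmetic difference is that you bound consecutive pairs $u_{SL}(x_s,x_{s+1})\leq a$ and propagate by transitivity, whereas the paper bounds $u_{SL}(x_r,x_s)\leq a$ for all pairs of chain points at once.
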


\begin{proof} By Theorem \ref{Th: strongly chaining}, we already know that $\mathfrak{T}$ is strongly chaining.

Let $B_0,B_k$ two $b$-connected subsets $(a,b)$-chained through smaller blocks. Let $x_0,...,x_k$ be the corresponding chain. Then, $u_{SL}(x_{r},x_{s})\leq a$ for every $1\leq r,s \leq k$ and there exist $x,x'\in B_0$ such that $u_{SL}(x,x')=b\geq a$. Thus, $u(x,x')\geq u(x_{r},x_{s})$ for every $1\leq r,s \leq k$. 

Now, suppose $t>0$ such that $B_0$ is contained in some block $B$ of $\theta(t)$. Then, $t\geq u(x,x')\geq u(x_{r},x_{s})$ for every $1\leq r,s \leq k$ and $\{x_0,...,x_k\}\in B$.
\end{proof}

\begin{figure}[ht]
\centering
\includegraphics[scale=0.4]{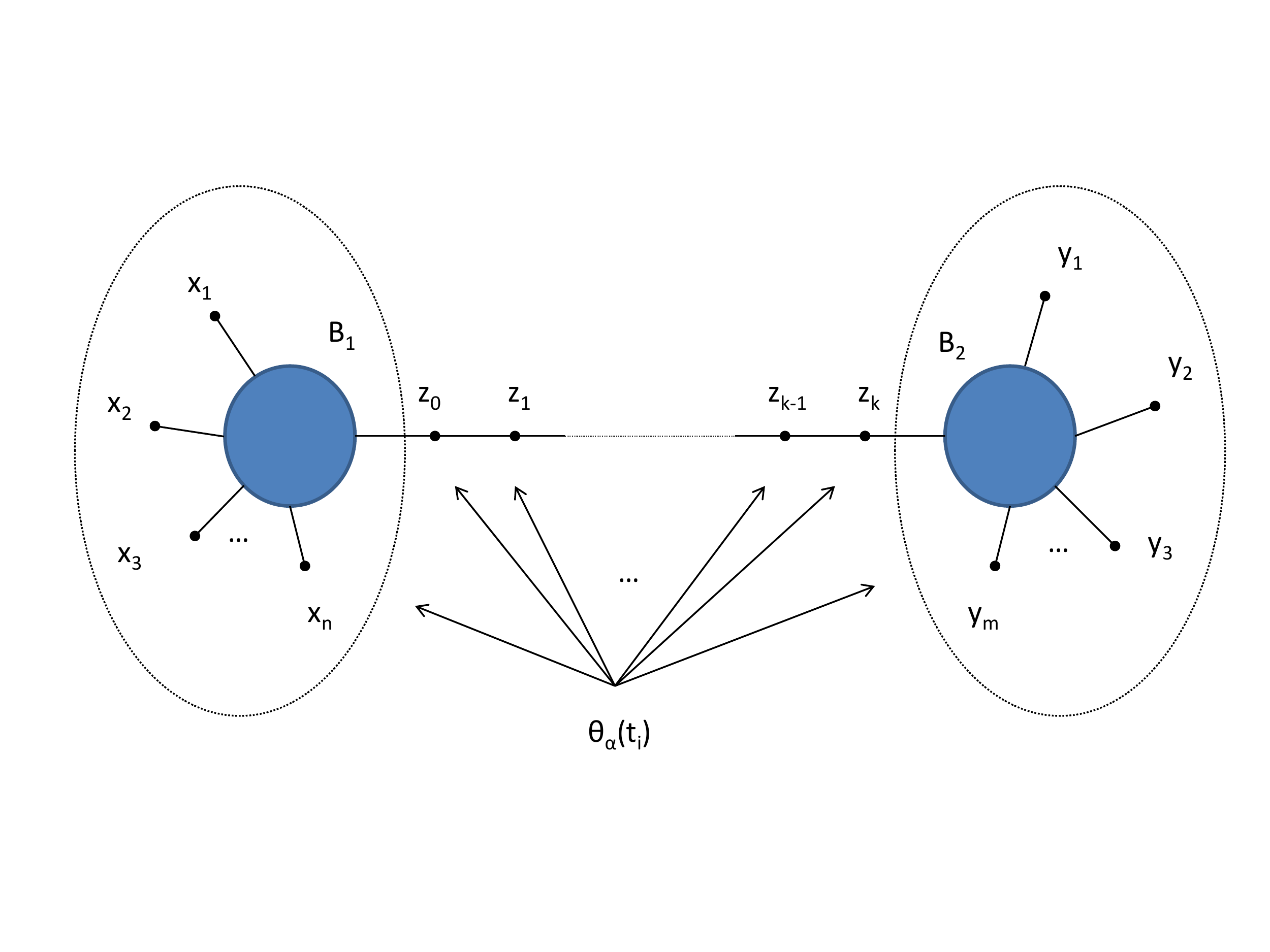}
\caption{If $\theta_{\alpha}(t_{i-1})$ satisfies the conditions from Definition \ref{Def: Moderate} with equalities on conditions $c)$ and $d)$, then $G_\alpha^{t_i}$ is the graph above and $\theta_{\alpha}(t_{i})$ is as indicated.}
\label{Moderate}
\end{figure}

\begin{definicion}\label{Def: Moderate} 
$\mathfrak{T}$ is $\alpha$-\textbf{bridge-unchaining} if it is weakly unchaining for the parameter $\alpha$ and the following implication holds:

Let $X$ be a finite metric space,  $\mathfrak{T}_\mathcal{D}(X)=\theta$ and let $$\theta(t_{i-1})=\{B_1,B_2,\{z_0\},...,\{z_k\},\{x_1\},...,\{x_n\},\{y_1\},...,\{y_m\}\}$$ with $z_j$, $x_r$, $y_s$ single points for every $j,r,s$. Suppose that

	\begin{itemize}
		\item[a)] $d(z_{j-1},z_j)= t_i$ for every $1\leq j\leq k$,
		\item[b)] $d(z_{j_1},z_{j_2})> t_i$ for every $|j_1-j_2|>1$,
		\item[c)] $d(x_r,B_1)\leq t_i$ for every $r$ 
		\item[d)] $d(y_s,B_2)\leq t_i$ for every $s$ 
		\item[e)] $d(z_0,B_1)\leq t_i$ and $d(z_k,B_2)\leq t_i$
		\item[f)] $\min_{1\leq j\leq k-1}\{d(z_j,B_1),d(z_j,B_2)\}>t_i$
		\item[g)] $\min_{r,j,s}\{d(x_r,z_j),d(z_j,y_s),d(x_r,B_2),d(y_s,B_1),d(x_r,y_s),d(B_1,B_2)\}> t_i$ 
		\item[h)] $\alpha<\max\{\#(B_1),\#(B_2)\}$, $\alpha \cdot \#(B_1)> \#(B_2)$ and $\alpha \cdot \#(B_2)> \#(B_1)$.
	\end{itemize}

Then, there exists $t>0$ such that $$\theta(t)=\{\{B_1\cup x_1\cup \cdots \cup x_n\},z_0,...,z_{k},\{B_2\cup y_1\cup \cdots \cup y_n\}\}.$$ 

$\mathfrak{T}$ is \textbf{bridge-unchaining} if it is $\alpha$-bridge-unchaining for some parameter $\alpha$. 
\end{definicion}

\begin{obs} Notice that in the conditions above, if $\min\{t\, | \, x\sim_t y \ \forall \, x,y \in B_1\}=t_i$, then $B_1$ and $B_2$ are $(t_i,t_i)$-chained through the $\alpha$-smaller blocks $z_0,...,z_k$. 
\end{obs}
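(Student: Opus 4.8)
The statement is a direct unwinding of the two definitions, so the plan is simply to verify, one by one, that the four conditions defining ``$(t_i,t_i)$-chained through $\alpha$-smaller blocks'' hold for the sequence of blocks $B_1,\{z_0\},\dots,\{z_k\},B_2$, taking $a=b=t_i$ and placing $B_1,B_2$ in the roles of the two endpoints $B_0,B_k$ of that definition (so that its index $k$ is replaced by $k+2$ here, the intermediate blocks being the singletons $\{z_0\},\dots,\{z_k\}$). Before addressing i)--iv) I would first record that every block in this list is $t_i$-connected, as the definition's preamble requires: the singletons trivially, $B_1$ by the added hypothesis, and $B_2$ because it is a block of $\theta(t_{i-1})$ and hence $t_{i-1}$-connected by Remark \ref{Remark: necessary chain}, so also $t_i$-connected since $t_{i-1}<t_i$.

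Condition i) is immediate: it asks exactly for $\min\{t\mid x\sim_t y\ \forall x,y\in B_1\}=t_i$, which is the added assumption. For condition ii) I would exhibit the required $t_i$-chain explicitly: by (e) there is $x'\in B_1$ with $d(x',z_0)\leq t_i$ and $y'\in B_2$ with $d(z_k,y')\leq t_i$, while by (a) each consecutive pair satisfies $d(z_{j-1},z_j)=t_i$; concatenating yields the chain $x',z_0,z_1,\dots,z_k,y'$, whose consecutive distances are all $\leq t_i$ and which meets each block of the list in turn. Condition iii) is read off directly from (g), whose clause $d(B_1,B_2)>t_i$ says precisely that $d(x,y)>t_i=b$ for every $(x,y)\in B_1\times B_2$.

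The only condition carrying genuine content is iv), which requires every intermediate block to be $\alpha$-smaller than both endpoints. Since each intermediate block here is a singleton with $\#(\{z_j\})=1$, this reduces to the single numerical inequality $\alpha<\min\{\#(B_1),\#(B_2)\}$, and this is exactly the point at which the cardinality hypotheses of (h) must be invoked rather than the purely metric hypotheses (a)--(g). I expect this to be the crux of the argument: one must relate the three inequalities $\alpha<\max\{\#(B_1),\#(B_2)\}$, $\alpha\cdot\#(B_1)>\#(B_2)$ and $\alpha\cdot\#(B_2)>\#(B_1)$ of (h) --- which are designed so that in the graph $G_\alpha^{t_i}$ the blocks $B_1,B_2$ are \emph{big} and the $z_j$ are \emph{small} in the sense of (\ref{Big blocks}) --- to the strict $\min$-bound demanded by iv). Everything else in the statement is bookkeeping around the two definitions; this last cardinality step is the one place where the specific arithmetic of (h), and the alignment between the big/small-block threshold and the notion of an $\alpha$-smaller block, is actually used.
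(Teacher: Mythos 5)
Your verification of the preamble and of conditions i)--iii) is correct, and it is exactly the routine unwinding that the paper leaves implicit (the remark is stated without proof): condition i) is the added hypothesis verbatim, the $t_i$-chain for ii) is obtained by concatenating a point of $B_1$ given by (e), the points $z_0,\dots,z_k$ using (a), and a point of $B_2$ given by (e), and iii) is the clause $d(B_1,B_2)>t_i$ of (g). The problem is that you stop precisely at the one step you yourself identify as carrying content. You reduce iv) to the inequality $\alpha<\min\{\#(B_1),\#(B_2)\}$ and say you ``expect'' it to follow from (h), but you never derive it --- and in fact it does not follow. Take $\alpha=3$, $\#(B_1)=2$, $\#(B_2)=5$: then $\alpha<\max\{\#(B_1),\#(B_2)\}=5$, $\alpha\cdot\#(B_1)=6>5=\#(B_2)$ and $\alpha\cdot\#(B_2)=15>2=\#(B_1)$, so all three clauses of (h) hold, yet $\alpha>\min\{\#(B_1),\#(B_2)\}$. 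Since (a)--(g) are purely metric and place no further constraint on the cardinalities, under your reading of condition iv) the remark is not merely unproved by your argument; it is false.

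The way out is to read condition iv) of the definition of subsets chained through $\alpha$-smaller blocks consistently with the rest of the paper. As printed, that condition is already self-contradictory: for $s=1$ it would demand $\alpha\cdot\#(B_1)<\min\{\#(B_1),\#(B_k)\}\leq\#(B_1)$, impossible for $\alpha\geq 1$, so the printed ``$\min\{\#(B_1),\#(B_k)\}$'' cannot be what is meant. The reading consistent with the big/small-block threshold (\ref{Big blocks}) of $SL^*(\alpha)$ and with hypothesis g) of Theorem \ref{Prop: moderate_2} (which is exactly this condition for non-singleton blocks, written with a maximum) is $\alpha\cdot\#(B_s)<\max\{\#(B_0),\#(B_k)\}$ for the intermediate blocks. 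With that reading, since every intermediate block in the present situation is a singleton $\{z_j\}$, condition iv) becomes $\alpha<\max\{\#(B_1),\#(B_2)\}$, which is verbatim the first clause of (h), and there is nothing left to prove. In short: the ``crux'' you deferred is not a crux at all once the definition is read consistently, and under the reading you chose no argument could close the gap.
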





\begin{figure}[ht]
\centering
\includegraphics[scale=0.4]{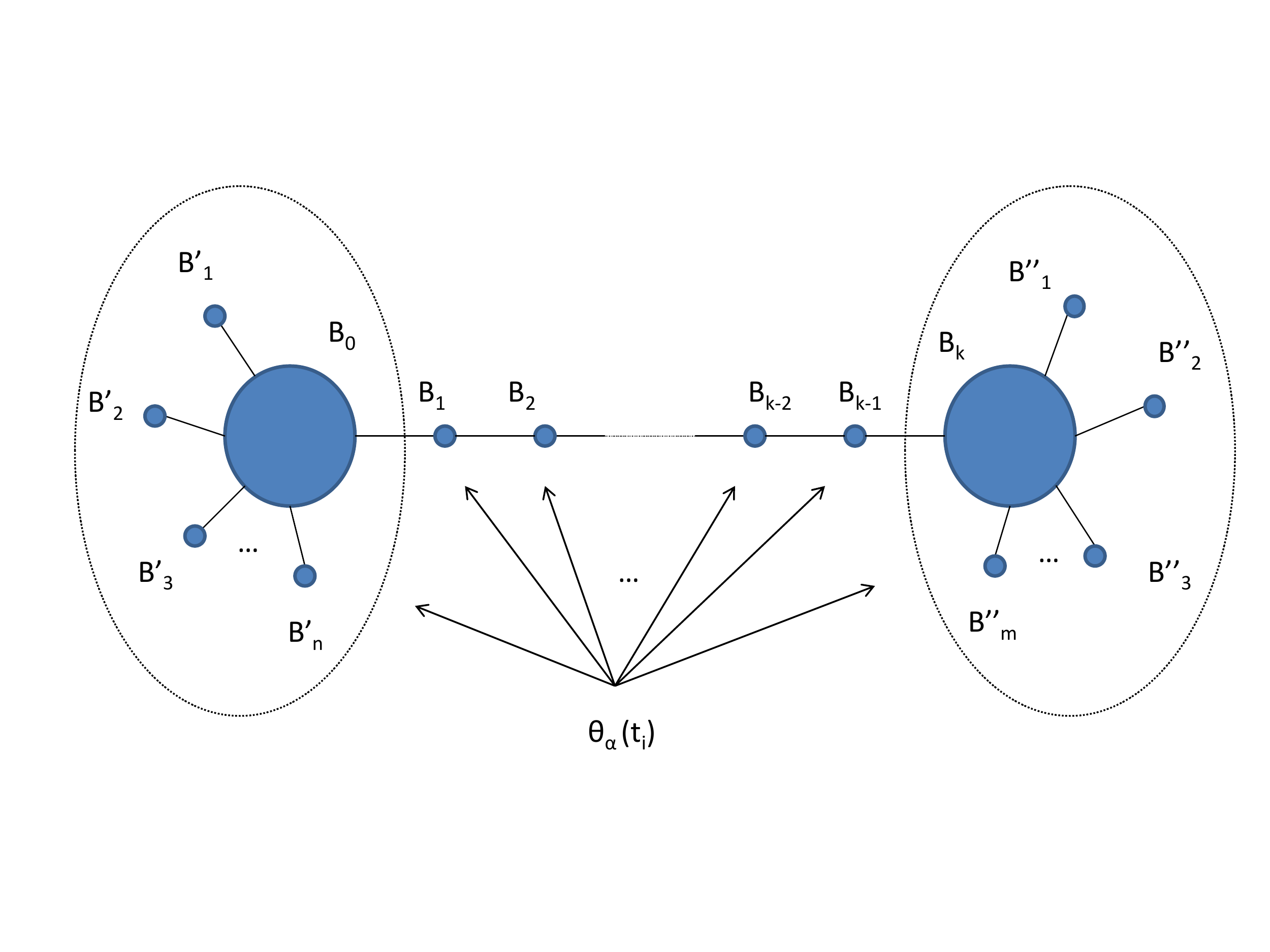}
\caption{If $t_j=t_i$ and $\theta_{\alpha}(t_{i-1})$ satisfies the conditions from Proposition \ref{Prop: moderate_2} with equalities on conditions $c)$ and $d)$, then $G_\alpha^{t_i}$ is the graph above and $\theta_{\alpha}(t_{i})$ is as indicated.}
\label{Moderate_2}
\end{figure}

\begin{teorema}\label{Prop: moderate_2} Let $X$ be a finite metric space and let $$\theta_{\alpha}(t_{j-1})=\{B_0,B_1,...,B_{k-1},B_k,B'_1,...,B'_n,B''_1,...,B''_m\}$$ with $t_j\leq t_i <2t_j$. Suppose that
	\begin{itemize}
		\item[a)] $d(B_{\ell-1},B_\ell)= t_j$ for every $1\leq \ell\leq k$,
		\item[b)] $d(B_{\ell_1},B_{\ell_2})> t_i$ for every $|\ell_1-\ell_2|>1$,
		\item[c)] $d(B'_r,B_0)\leq t_i$ for every $r$ 
		\item[d)] $d(B''_s,B_k)\leq t_i$ for every $s$ 
		\item[e)] $d(B'_r,B_\ell)> t_i$ for every $r$ and  every $1\leq \ell \leq k$
		\item[f)] $d(B_\ell,B''_s)> t_i$ for every $s$ and  every $0\leq \ell \leq k-1$
		\item[g)] $\alpha \max_{1\leq \ell\leq k-1}\{\#(B_\ell)\}<\max\{\#(B_0),\#(B_k)\}$, $\alpha \cdot \#(B_0)> \#(B_k)$ and $\alpha \cdot \#(B_k)> \#(B_0)$.
		\item[h)] $\alpha >dim(F_{t_i}(B_\ell))$ for every $1\leq \ell \leq k-1$, $\alpha >dim(F_{t_i}(B'_r))$, $\alpha >dim(F_{t_i}(B''_s))$  for every $r,s$.
 	\end{itemize}

Then, $$\theta_{\alpha}(t_i)=\{\{B_0\cup B'_1\cup \cdots \cup B'_n\},B_1,...,B_{k-1},\{B_k\cup B''_1\cup \cdots \cup B''_m\}\}.$$ 
\end{teorema}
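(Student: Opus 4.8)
The plan is to run the $SL^*(\alpha)$ construction level by level on $[t_j,t_i]$, at each step writing down the graph $G_\alpha^{t}$ on the current blocks, splitting each connected component into big and small blocks by (\ref{Big blocks}), and applying the merging rules iii) and iv). The mechanism to be reproduced is exactly that of Example \ref{Ejp: Big_small}: the component carrying the bridge $B_0,B_1,\dots,B_k$ should have precisely two big blocks, namely the growing clusters around $B_0$ and around $B_k$, so that the bridge blocks $B_1,\dots,B_{k-1}$ fall into the two–sided case of iv) and survive as isolated blocks. Throughout, $\theta_\alpha$ denotes the $SL^*(\alpha)$ dendrogram, whose rules iii) and iv) are what distinguish it from $SL(\alpha)$.

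First I would fix the edges of $G_\alpha^{t}$ for $t_j\le t\le t_i$. Every edge that can arise along the bridge or toward the two ends has at least one endpoint among the low–dimensional blocks of h) (a bridge block $B_\ell$, or a satellite $B'_r$ or $B''_s$), whose Rips dimension at scale $t\le t_i$ is $<\alpha$. Since condition i) provides a $1$–simplex $\{p,q\}\in F_{t}$ joining the two blocks $B,B'$, this simplex serves as $\Delta$ in ii) with $\alpha\cdot dim(\Delta)\ge\alpha>\min\{dim\,F_{t}(B),dim\,F_{t}(B')\}$, so ii) is automatic. Hence by a), c)–f) the only adjacencies are the consecutive $B_{\ell-1}-B_\ell$ (present from $t=t_j$ by a)), the $B'_r-B_0$ and $B''_s-B_k$ (present once $t$ reaches the relevant distance $\le t_i$), and possibly edges internal to the $B'$'s or to the $B''$'s. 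Conditions b), e), f), together with $t_i<2t_j$ (the inequality that makes a) and b) jointly consistent), forbid every edge between the $B_0$–side and the $B_k$–side except through the bridge; so from level $t_j$ on the configuration lies in one connected component $A$ of $G_\alpha^{t}$ whose unique route between the ends is $B_0-B_1-\cdots-B_k$.

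The heart of the proof is the big/small split inside $A$. Let $C_0^{t},C_k^{t}$ be the blocks of $\theta_\alpha(t)$ containing $B_0$ and $B_k$; both only grow with $t$ as neighbouring $B'$, resp. $B''$, blocks are absorbed. By the first inequality in g), $\alpha\cdot\#(B_\ell)<\max\{\#(B_0),\#(B_k)\}\le\max\{\#(C_0^{t}),\#(C_k^{t})\}$, so every bridge block $B_\ell$ is small in $A$ at every level. The step I expect to be the main obstacle is to show that $C_0^{t}$ and $C_k^{t}$ both stay big, i.e. that absorbing the satellites preserves the comparability $\alpha\cdot\#(C_0^{t})\ge\#(C_k^{t})$ and $\alpha\cdot\#(C_k^{t})\ge\#(C_0^{t})$ furnished by g) at level $t_{j-1}$: a priori one side could swallow enough satellite mass to outgrow $\alpha$ times the other, collapsing the whole component by iv). This is precisely where the low–dimensional, end–attached satellites of h) must be quantified (they are the small satellites of $B_0,B_k$ drawn in Figure \ref{Moderate_2}), so that neither end outgrows the other by more than the factor $\alpha$. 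Granting this, $H_\alpha(A)$ has exactly two components, one on each side, non–adjacent because $d(B_0,B_k)>t_i$ by b).

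Finally I would read off $\theta_\alpha(t_i)$. By iii) each end collapses to one block, so $C_0^{t_i}=B_0\cup B'_1\cup\cdots\cup B'_n$ and $C_k^{t_i}=B_k\cup B''_1\cup\cdots\cup B''_m$ (the satellite small–block components, being adjacent to a single big component, are attached by iv); cf. Remark \ref{Remark: connected}). The bridge $\{B_1,\dots,B_{k-1}\}$ is a connected small–block subgraph adjacent to \emph{both} big components, so the uniqueness clause of iv) fails and no bridge block merges to a big block; and since iii) and iv) never relate two small blocks directly, each $B_\ell$ remains its own block, exactly as $B_3,B_4$ do in Example \ref{Ejp: Big_small}. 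This yields $\theta_\alpha(t_i)=\{C_0^{t_i},B_1,\dots,B_{k-1},C_k^{t_i}\}$, the asserted partition. Note the two ends \emph{do} lie in one connected component of $G_\alpha^{t_i}$, so that $SL(\alpha)$ would merge the whole bridge into a single block; it is precisely the two–sided bridge that breaks every iii)/iv) chain between the ends in $SL^*(\alpha)$.
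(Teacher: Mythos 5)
Your proposal follows the paper's own proof in its overall strategy: run $SL^*(\alpha)$ level by level on $[t_j,t_i]$, observe that condition $ii)$ is automatic because one endpoint of every relevant edge has Rips dimension $<\alpha$ by $h)$, note that the bridge blocks are always small by the first inequality of $g)$, and conclude via $iii)$/$iv)$ that the end-satellites are absorbed while the bridge blocks, being adjacent to two distinct big components, survive as isolated blocks. That is exactly the published argument (which compresses the iteration into ``the same argument holds for every $t_j<t\leq t_i$'').

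However, as a proof your text is incomplete at precisely the point you flag: you never establish that the growing end-clusters $C_0^t$, $C_k^t$ remain \emph{big}, you only write ``granting this''. And this step is not a routine verification that can be filled in from $a)$--$h)$: hypothesis $h)$ bounds the Rips \emph{dimension} of the satellites and bridge blocks, not their \emph{cardinality}. Take $\alpha=2$, $\#(B_0)=\#(B_k)=10$, bridge blocks singletons, and let $B''_1$ consist of $100$ points spaced $t_{j-1}$ apart along a line with $t_j\leq t_i<2t_{j-1}$; such a chain is merged into one block by $SL^*(\alpha)$ at scale $t_{j-1}$ (all singletons are mutually big, so $iii)$ applies), it satisfies $\dim F_{t_i}(B''_1)=1<\alpha$, and all of $a)$--$h)$ can be arranged. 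Then the maximal cardinality in the component is $100$, so neither $B_0$ nor $B_k$ is a big block, $H_\alpha(A)=\{B''_1\}$ is the unique big component, clause $iv)$ has no competing big component, and the \emph{entire} component---bridge included---collapses into a single block, contradicting the asserted conclusion. So the obstacle you postponed is a genuine gap, and in fact a gap in the statement itself: the hypotheses implicitly needed are cardinality bounds such as $\alpha\cdot\#(B_0)\geq\#(B'_r),\#(B''_s)$ for all $r,s$, which is what Figure \ref{Moderate_2} silently depicts and what the paper's proof tacitly assumes when it treats $\max\{\#(B_0),\#(B_k)\}$ as the maximum over the whole component. A second instance of the same phenomenon: you claim that $b)$, $e)$, $f)$ and $t_i<2t_j$ forbid every edge between the $B'$-side and the $B''$-side, but no hypothesis constrains $d(B'_r,B''_s)$, so such edges can exist and would likewise destroy the two-big-components picture. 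In short, you correctly isolated the weak spot (which the paper glosses over), but a complete proof must either close it or add the missing hypotheses---it cannot be closed as stated.
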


\begin{proof} Let $$\theta_{\alpha}(t_{j-1})=\{B_0,B_1,...,B_{k-1},B_k,B'_1,...,B'_n,B''_1,...,B''_m\}$$ satisfying the conditions above. For $t=t_j$ let us apply conditions $i)$ and $ii)$ of $SL(\alpha)$. Since, $\alpha >dim(F_{t_i}(B_\ell))>dim(F_{t_j}(B_\ell))$ for every $1\leq \ell \leq k-1$, we obtain edges $\{B_{\ell-1},B_\ell\}$ for every $1\leq \ell\leq k$. Since $\alpha >dim(F_{t_i}(B'_r))$, $\alpha >dim(F_{t_i}(B''_s))$  for every $r,s$, we also obtain edges $\{B'_r,B_0\}$ and $\{B''_s,B_k\}$ for every $r,s$ such that the distance is less or equal than $t_j$. Thus, the blocks $B_\ell$, $0\leq \ell \leq k$ are in the same connected component of $G_\alpha^{t_j}$. Since $\alpha \max_{1\leq \ell\leq k-1}\{\#(B_\ell)\}<\max\{\#(B_0),\#(B_k)\}$, by $iv)$, $B_\ell$ is an independent block in $\theta_{\alpha}(t_j)$ for every $1\leq \ell \leq k-1$. Also, by $iv)$, the blocks $B'_s$ joined by an edge to $B_0$ (resp. the blocks $B''_r$ joined by an edge to $B_k$) are merged with $B_0$ (resp. $B_k$). 

The same argument holds for every $t_j<t\leq t_i$. Thus,
$$\theta_{\alpha}(t_i)=\{\{B_0\cup B'_1\cup \cdots \cup B'_n\},B_1,...,B_{k-1},\{B_k\cup B''_1\cup \cdots \cup B''_m\}\}.$$ 
%
%
\end{proof}

Renaming the corresponding blocks, it is immediate to obtain the following:

\begin{cor}\label{Cor: unchaining} $SL^*(\alpha)$ is $\alpha$-bridge unchaining.
\end{cor}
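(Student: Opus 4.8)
The plan is to derive the Corollary as an immediate consequence of Theorem \ref{Prop: moderate_2} and Corollary \ref{Cor: weakly_2}, both already at our disposal. By Definition \ref{Def: Moderate}, $\alpha$-bridge-unchaining splits into two requirements: that $SL^*(\alpha)$ be weakly unchaining for the parameter $\alpha$, and that the bridge implication hold. The first is exactly Corollary \ref{Cor: weakly_2}, so I would dispose of it in a single sentence and spend the rest of the argument on the bridge implication.

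For the bridge implication, the crucial observation is that the configuration of Definition \ref{Def: Moderate} is precisely the specialization of Theorem \ref{Prop: moderate_2} in which every intermediate and attached block is a singleton and $t_j=t_i$. First I would fix the dictionary $B_0\mapsto B_1$, $B_k\mapsto B_2$, identifying the Proposition's intermediate blocks with the single points $\{z_0\},\dots,\{z_k\}$ and its attached blocks $B'_1,\dots,B'_n$, $B''_1,\dots,B''_m$ with $\{x_1\},\dots,\{x_n\}$, $\{y_1\},\dots,\{y_m\}$. Under this dictionary the chain and separation hypotheses a)--f) of the Proposition read off from the distance conditions a)--g) of the Definition; the Proposition's equalities $d(B_{\ell-1},B_\ell)=t_j$ enter its proof only through the edge set of $G_\alpha^{t_i}$, which is controlled by the inequality $\min d\le t_i$, so the weaker inequalities of the Definition suffice.

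The one substantive step is matching the cardinality and dimension hypotheses. Since each intermediate and attached block is a single point, $dim(F_{t_i}(B_\ell))=dim(F_{t_i}(B'_r))=dim(F_{t_i}(B''_s))=0<\alpha$, so hypothesis h) of the Proposition is automatic, and $\max_{1\le\ell\le k-1}\#(B_\ell)=1$ collapses the first inequality of hypothesis g) to $\alpha<\max\{\#(B_1),\#(B_2)\}$, which is the first clause of hypothesis h) of the Definition; the remaining inequalities $\alpha\cdot\#(B_0)>\#(B_k)$ and $\alpha\cdot\#(B_k)>\#(B_0)$ coincide with the rest. With every hypothesis in place, Theorem \ref{Prop: moderate_2} delivers $\theta_\alpha(t_i)=\{\{B_1\cup x_1\cup\cdots\cup x_n\},z_0,\dots,z_k,\{B_2\cup y_1\cup\cdots\cup y_m\}\}$, exactly the partition demanded by Definition \ref{Def: Moderate}.

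The hard part, such as it is, will be the index bookkeeping: the Definition presents the bridge as two big blocks joined by the points $z_0,\dots,z_k$, whereas the Proposition's chain $B_0,\dots,B_k$ already absorbs the two big blocks into its endpoints, so the Proposition must be invoked with a chain two units longer, keeping careful track of which indices are big and which are small. Once this relabeling is fixed and the triviality of singleton Rips complexes is recorded, nothing substantive remains, which is why the statement is genuinely a corollary rather than an independent argument.
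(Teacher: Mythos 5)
Your proof is correct and follows the paper's own route exactly: the paper likewise obtains the corollary by specializing Theorem \ref{Prop: moderate_2} to the case where all intermediate and attached blocks are singletons and $t_j=t_i$ (its entire proof is ``renaming the corresponding blocks, it is immediate''), with Corollary \ref{Cor: weakly_2} covering the weakly-unchaining clause of Definition \ref{Def: Moderate}. Your extra bookkeeping --- that singletons make hypothesis h) of the theorem automatic and collapse its hypothesis g) to the Definition's h), that the chain must be taken two blocks longer, and that the equalities $d(B_{\ell-1},B_\ell)=t_j$ are only used through membership in the edge set of $G_\alpha^{t_i}$ so the Definition's inequalities suffice --- is precisely what the paper leaves implicit.
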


See example \ref{Ejp: unchain 1b}.

\begin{obs}\label{AL CL isolated} $CL$ and $AL$ are not bridge-unchaining. As we mentioned in the introduction, $CL$ and $AL$ show a tendency to merge isolated points before joining them to a pre-existing cluster. This tendency can be seen aplying the algorithms in the following cases.

Let $\theta_{CL}(t_{i-1})=\{B_1,x_1,...,x_n,z_0,..., z_{k},B_2, y_1,..., y_n\}$ and suppose the conditions from Definition \ref{Def: Moderate} hold with equalities on conditions $c)$ and $d)$. See Figure \ref{Moderate}. Assuming that $\ell^{CL}(z_0,B_1)>t_i$ and $\ell^{CL}(z_k,B_2)>t_i$, then in $\theta_{CL}(t_i)$, $\{z_0\cup \cdots \cup z_{k}\}$ is a cluster.

Similarly, let $\theta_{AL}(t_{i-1})=\{B_1,x_1,...,x_n,z_0,..., z_{k},B_2, y_1,..., y_n\}$ and suppose the conditions from Definition \ref{Def: Moderate} hold with equalities on conditions $c)$ and $d)$. Assuming that $\ell^{AL}(z_0,B_1)>t_i$ and $\ell^{AL}(z_k,B_2)>t_i$, then in $\theta_{AL}(t_i)$, $\{z_0\cup \cdots \cup z_{k}\}$ is a cluster. 

In either case, the point $z_0$ (resp. $z_k$) would appear to be far away from $B_1$ (resp. $B_2$) while $z_0$, $z_k$ appear to be very close to each other.
\end{obs}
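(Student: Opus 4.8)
The plan is to obtain the conclusion in two ways: a short logical argument that settles the statement outright, and an explicit computation that exhibits the characteristic failure mechanism the remark illustrates.

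First, the rigorous argument. By Definition \ref{Def: Moderate}, being $\alpha$-bridge-unchaining requires, as a prerequisite, being weakly unchaining for the parameter $\alpha$, and a method is bridge-unchaining only if it is $\alpha$-bridge-unchaining for some $\alpha$. But Observation \ref{Ejp: AL-CL} already shows that $CL$ and $AL$ $HC$ are not weakly unchaining for any $\alpha$. Hence, for every $\alpha$ they fail the prerequisite of Definition \ref{Def: Moderate}, so they are not $\alpha$-bridge-unchaining for any $\alpha$, i.e. they are not bridge-unchaining. This already proves the statement.

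Second, to make the underlying tendency to ``merge isolated points before joining them to a big cluster'' explicit, I would verify the two displayed claims about $\theta_{CL}(t_i)$ and $\theta_{AL}(t_i)$. For $CL$, I would tabulate the complete-linkage distances between the blocks of $\theta_{CL}(t_{i-1})$ and check that the only pairs incident to a $z$-block with $\ell^{CL}\le t_i$ are the consecutive pairs $\{z_{j-1},z_j\}$, for which $\ell^{CL}(z_{j-1},z_j)=d(z_{j-1},z_j)=t_i$ by condition $a)$. All other $z$-incident pairs exceed $t_i$: non-consecutive $z$'s by condition $b)$; each interior $z_j$ to $B_1,B_2$ by condition $f)$, since $\ell^{CL}(z_j,B_s)\ge d(z_j,B_s)>t_i$; the $z$'s to the $x_r,y_s$ by condition $g)$; and the two anchors $z_0$--$B_1$, $z_k$--$B_2$ by the explicit hypotheses $\ell^{CL}(z_0,B_1)>t_i$ and $\ell^{CL}(z_k,B_2)>t_i$. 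These last two are consistent with condition $e)$ precisely because $B_1$ and $B_2$ are non-degenerate: although the minimal distance $d(z_0,B_1)\le t_i$, the maximal distance $\max_{w\in B_1}d(z_0,w)$ may well exceed $t_i$. Consequently the $z$-blocks form a single path-connected component of the complete-linkage graph at level $t_i$, isolated from every other block, and they merge into $\{z_0\cup\cdots\cup z_k\}$. Since $\ell^{CL}$ is a maximum of pairwise distances, merging only enlarges the index set over which the maximum is taken and so cannot decrease it; thus the merged bridge stays at distance $>t_i$ from $B_1$ (witnessed by $z_0$), from $B_2$ (witnessed by $z_k$) and from every $x_r,y_s$. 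Hence no further merge occurs at $t_i$ and $\{z_0\cup\cdots\cup z_k\}$ is indeed a block of $\theta_{CL}(t_i)$.

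For $AL$ the skeleton is the same, and I expect the averaging step to be the main obstacle. Unlike complete linkage, the average-linkage distance of a merged block $B\cup C$ to a third block $D$ is the weighted mean $\bigl(\#(B)\,\ell^{AL}(B,D)+\#(C)\,\ell^{AL}(C,D)\bigr)/(\#(B)+\#(C))$, which can decrease under merging. So it does not suffice to control the two anchors: I would instead show that every single $z_j$ satisfies $\ell^{AL}(z_j,B_1)>t_i$ and $\ell^{AL}(z_j,B_2)>t_i$ --- for interior $j$ from condition $f)$ together with $\ell^{AL}\ge d$, for the two anchors from the hypotheses $\ell^{AL}(z_0,B_1)>t_i$, $\ell^{AL}(z_k,B_2)>t_i$, and for the far ends $z_0$--$B_2$, $z_k$--$B_1$ from the bridge geometry of Figure \ref{Moderate}. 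Then any weighted mean of values all exceeding $t_i$ again exceeds $t_i$, so every partial merge of the $z$-chain stays at average-linkage distance $>t_i$ from $B_1$, $B_2$ and (by condition $g)$) from the $x_r,y_s$. As before the $z$'s coalesce into $\{z_0\cup\cdots\cup z_k\}$ while remaining detached from the big clusters, giving a block of $\theta_{AL}(t_i)$. In both cases the resulting partition has the bridge points merged into one block, the exact opposite of the separation $z_0,\dots,z_k$ demanded by the bridge-unchaining conclusion of Definition \ref{Def: Moderate}, which reconfirms that neither method is bridge-unchaining.
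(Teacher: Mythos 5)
Your proposal is correct, and its first paragraph takes a genuinely different route from the paper. The paper's remark never invokes the weakly-unchaining prerequisite: it simply runs the bridge configuration of Definition \ref{Def: Moderate} through $CL$ and $AL$ and observes that the chain coalesces into the single cluster $\{z_0\cup\cdots\cup z_k\}$ at $t_i$, so that (the merge being irreversible in a dendrogram) the partition demanded by bridge-unchaining can never appear. Your definitional shortcut --- $\alpha$-bridge-unchaining requires weakly unchaining for the same parameter $\alpha$, and Remark \ref{Ejp: AL-CL} denies that for $CL$ and $AL$ --- settles the statement in one line; its only debt is that the example in Remark \ref{Ejp: AL-CL} literally refutes weak unchaining only for $\alpha<3$ (for larger $\alpha$ the hypothesis $dim\, F_{t_j}(N_s)>\alpha$ fails and the implication is vacuous), so to cover every $\alpha$ one must enlarge the cliques $N_1,N_2$, a routine scaling the paper leaves implicit when it asserts the unqualified non-property. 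Your second part is essentially the paper's own argument carried out with more care: the monotonicity of $\ell^{CL}$ under merging, and the weighted-mean identity for $\ell^{AL}$ showing that a union of blocks each at average-linkage distance $>t_i$ from a third block remains at distance $>t_i$, are precisely the details the paper asserts without proof; note moreover that in the recursive formulation of Section \ref{Section: HC} the whole $z$-chain merges in a single step via $\sim_{\ell,R}$ with $R=t_i$, so the intermediate ``partial merges'' you guard against for $AL$ never actually occur. You are also right to flag that conditions a)--h) nowhere bound $d(z_0,B_2)$ or $d(z_k,B_1)$, which must be read off the bridge geometry of Figure \ref{Moderate} (or added as hypotheses); this small gap is present in the paper's remark as well. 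In sum, your shortcut buys the stated non-property outright, while the explicit computation --- the paper's choice, and your second half --- exhibits the phenomenon the remark is really after: the bridge points merge with one another instead of remaining the singletons $z_0,\dots,z_k$ required by the conclusion of Definition \ref{Def: Moderate}.
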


This type of chaining through points or through smaller blocks may be also avoided by DBSCAN. In the conditions from Definition \ref{Def: Moderate}, if $\varepsilon<d(B_1,B_2)$ and the points $z_0,...,z_k$ are not core points (with $k>0$), DBSCAN would not merge $B_1$ and $B_2$ either. However, it would not necessarily return the clustering $\{\{B_1\cup x_1\cup \cdots \cup x_n\},z_0,...,z_{k},\{B_2\cup y_1\cup \cdots \cup y_n\}\}$. 

The result obviously depends on the density distribution of the points in $B_1$ and $B_2$ and the  parameters $\varepsilon$, $minPts$ involved. Let us assume that every point in $B_1$, $B_2$ is a core point and $\varepsilon \geq t_i$. In this case, the output of DBSCAN will be two clusters, $B,B'$ with $\{B_1\cup x_1\cup \cdots \cup x_n\cup z_0\}\subset B$, $\{B_2\cup y_1\cup \cdots \cup y_n\cup z_k\}\subset B'$ and some single points (noise) from the sequence $z_1,...,z_{k-1}$



\section{Discussion}\label{Section: Conclusions}

Herein, we treat a particular type of chaining effect which is characteristic from single linkage. This effect is reduced if the algorithm shows some sensitivity to the density distribution of the data set. This is why average linkage or complete linkage are usually preferred by practitioners.  Our aim was to define an algorithm  such that it encodes information about the density distribution with a very simple input. $SL(\alpha)$ is able to detect clusters affected by this kind of chaining effect.

We also provide some theoretical background to the study of the chaining effect. 
Thus, a hierarchical clustering method is \textit{strongly chaining} if every pair of chained clusters is automatically merged in one cluster. This is the case of single linkage. On the contrary, a hierarchical clustering method is \textit{weakly unchaining} if at least it detects some type of chained clusters when they have dense nuclei distant apart.  We prove that  $SL(\alpha)$ is weakly unchaining while complete linkage and average linkage are not. Also, compared with DBSCAN, our method seems to have a more natural and powerful treatment of this problem.  

One weakness of $SL(\alpha)$ is that it fails to detect when two blocks are chained by a single point or a small block. $SL^*(\alpha)$ deals with that weakness. We prove that $SL^*(\alpha)$ is  $\alpha$-\textit{bridge-unchaining} showing that $SL^*(\alpha)$ is capable of detecting this kind of chaining.

We focused on the theoretical problem of chaining so 
we did not consider the computational problem involved. It would be interesting to study how efficient is $SL(\alpha)$ compared with other methods.

One of the main advantages of $SL$ is that it is stable in the Gromov-Hausdorff sense which is a really interesting property for a clustering algorithm. If the algorithm is too sensitive to small perturbation of the data the output may  be easily meaningless. Unfortunately, our method does not share with $SL$ the good stability properties. Modifying the algorithm to deal with the chaining effect we lost that advantage. The problem of stability of linkage-based clustering methods  and the difficulties to define algorithms, other than $SL$, stable in the Gromov-Hausdorff sense is studied in \cite{M}.

\section{Acknowledgments}

The author would like to express his gratitude to Bruce Hughes for pointing out such a nice paper to read.

\end{document}